\let\oldvec\vec
\let\vec\oldvec
\let\P\undefined
\DeclareMathOperator{\P}{P}
\DeclareMathOperator*{\argmin}{arg\,min} 
\def\ddefloop#1{\ifx\ddefloop#1\else\ddef{#1}\expandafter\ddefloop\fi}
\def\ddef#1{\expandafter\def\csname bb#1\endcsname{\ensuremath{\mathbb{#1}}}}
\def\ddefloop#1{\ifx\ddefloop#1\else\ddef{#1}\expandafter\ddefloop\fi}
\def\ddef#1{\expandafter\def\csname fr#1\endcsname{\ensuremath{\mathfrak{#1}}}}
\def\ddefloop#1{\ifx\ddefloop#1\else\ddef{#1}\expandafter\ddefloop\fi}
\def\ddef#1{\expandafter\def\csname eul#1\endcsname{\ensuremath{\EuScript{#1}}}}
\def\ddefloop#1{\ifx\ddefloop#1\else\ddef{#1}\expandafter\ddefloop\fi}
\def\ddef#1{\expandafter\def\csname scr#1\endcsname{\ensuremath{\mathscr{#1}}}}
\def\ddefloop#1{\ifx\ddefloop#1\else\ddef{#1}\expandafter\ddefloop\fi}
\def\ddef#1{\expandafter\def\csname b#1\endcsname{\ensuremath{\mathbf{#1}}}}
\def\ddef#1{\expandafter\def\csname c#1\endcsname{\ensuremath{\mathcal{#1}}}}
\def\ddef#1{\expandafter\def\csname h#1\endcsname{\ensuremath{\widehat{#1}}}}
\def\ddef#1{\expandafter\def\csname hc#1\endcsname{\ensuremath{\widehat{\mathcal{#1}}}}}
\def\ddef#1{\expandafter\def\csname t#1\endcsname{\ensuremath{\widetilde{#1}}}}
\def\ddef#1{\expandafter\def\csname tc#1\endcsname{\ensuremath{\widetilde{\mathcal{#1}}}}}
\newcommand{\neutralize}[1]{\expandafter\let\csname c@#1\endcsname\count@}
\newcommand{\wtilde}[1]{\widetilde{#1}}
\newcommand{\reals}{\mathbb{R}}
\newcommand{\what}[1]{\widehat{#1}}
\newcommand{\E}{\mathbf{E}}
\newcommand{\commentout}[1]{}
\renewcommand{\P}{\mathbf{P}}
\newcommand{\nn}{\nonumber}
\newcommand{\dlyap}{\dlyap}
\newcommand{\citep}[1]{\cite{#1}}
\newtheorem{assumption}{Assumption}
\newcommand{\stochleq}{\leqclosed}
\newcommand{\A}{\mathsf{A}}  
\newcommand{\B}{\mathsf{B}}  
\newcommand{\KL}{\operatorname{KL}}
\newcommand{\w}{\bm{w}}
\newcommand{\g}{\bm{g}}
\newcommand{\freegrad}{$\textsc{FreeGrad}$}
\newcommand{\comp}{\mathrm{C}}
\newtheorem{theorem}{Theorem}
\newtheorem{definition}[theorem]{Definition}
\newtheorem{proposition}[theorem]{Proposition}
\newtheorem{corollary}[theorem]{Corollary}
\newtheorem{remark}[theorem]{Remark}
\newtheorem{lemma}[theorem]{Lemma}
\title{Risk-Monotonicity in Statistical Learning} 
\author{Zakaria Mhammedi\thanks{Work done while at the Australian National University.}\\
	Massachusetts Institute of Technology\\
	{\small\texttt{mhammedi@mit.edu}}
}
\begin{document}
	
	\maketitle
	
	\begin{abstract}
		Acquisition of data is a difficult task in many applications of machine learning, and it is only natural that one hopes and expects the population risk to decrease (better performance) \emph{monotonically} with increasing data points. It turns out, somewhat surprisingly, that this is not the case even for the most standard algorithms that minimize the empirical risk. Non-monotonic behavior of the risk and instability in training have manifested and appeared in the popular deep learning paradigm under the description of \textit{double descent}. These problems highlight the current lack of understanding of learning algorithms and generalization. It is, therefore, crucial to pursue this concern and provide a characterization of such behavior. In this paper, we derive the first consistent and risk-monotonic (in high probability) algorithms for a general statistical learning setting under weak assumptions, consequently answering some questions posed by \citep{viering2019open} on how to avoid non-monotonic behavior of risk curves. We further show that risk monotonicity need not necessarily come at the price of worse excess risk rates. To achieve this, we derive new empirical Bernstein-like concentration inequalities of independent interest that hold for certain non-i.i.d.~processes such as Martingale Difference Sequences. 
	\end{abstract}

	\section{Introduction}
	Guarantees on the performance of machine learning algorithms are desirable, especially given the widespread deployment. A traditional performance guarantee often takes the form of a generalization bound, where the \emph{expected} risk associated with hypotheses returned by an algorithm is bounded in terms of the corresponding empirical risk plus an additive error which typically converges to zero as the sample size increases. However, interpreting such bounds is not always straight forward and can be somewhat ambiguous. In particular, given that the error term in these bounds goes to zero, it is tempting to conclude that more data would monotonically decrease the expected risk of an algorithm such as the Empirical Risk Minimizer (ERM). However, this is not always the case; for example, \cite{loog2019minimizers} showed that increasing the sample size by one, can sometimes make the test performance worse in expectation for commonly used algorithms such as ERM in popular settings including linear regression. This type of non-monotonic behavior is still poorly understood and indeed not a desirable feature of an algorithm since it is expensive to acquire more data in many applications.
	
	Non-monotonic behavior of \emph{risk curves} \citep{shalev2014understanding}---the curve of the expected risk as a function of the sample size---has been observed in many previous works \citep{duin1995, opper1996statistical, smola2000advances,Opper2001} (see also \citep{loog2019minimizers,viering2021shape} for nice accounts of the literature). At least two phenomena have been identified as being the cause behind such behavior. The first one, coined \emph{peaking} \citep{kramer2009peaking,duin2000classifiers}, or \emph{double descent} according to more recent literature \citep{belkin2018reconciling, spigler2018jamming, belkin2019two, dereziski2019exact, deng2019model, mei2019generalization,nakkiran2019more, nakkiran2020deep, derezinski2020exact, chen2020multiple, cheema2020geometric, d2020triple, nakkiran2020optimal}, is the phenomenon where the risk curve peaks at a certain sample size $n$. This sample size typically represents the cross-over point from an over-parameterized to under-parameterized model. For example, when the number of data points is less than the number of parameters of a model (over-parameterized model), such as Neural Networks, the expected risk can typically increase until the number of data points exceeds the number of parameters (under-parameterized model). The second phenomenon is known as \emph{dipping} \citep{loog2012dipping,loog2015contrastive}, where the risk curve reaches a minimum at a certain sample size $n$ and increases after that---never reaching the minimum again even for very large $n$. This phenomenon typically happens when the algorithm is trained on a surrogate loss that differs from the one used to evaluate the risk \citep{ben2012minimizing}.
	
	\begin{wrapfigure}{r}{0.5\linewidth}
		\centering
		\includegraphics[width=\linewidth]{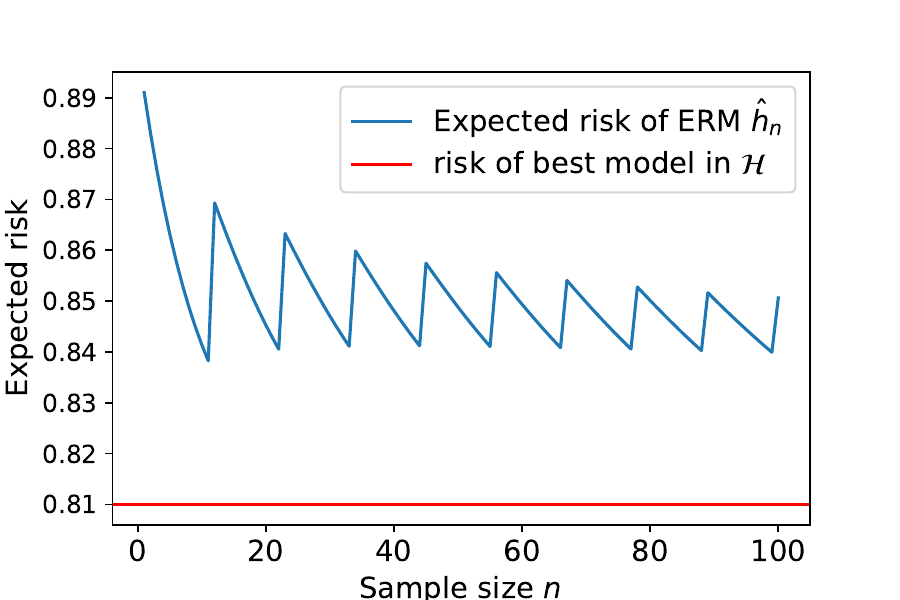}
		\caption{Expected risk of ERM on a 1d linear regression problem with absolute loss and two instances $z_1=(x_1,y_1)=(1,1)$ and $z_2=(x_2,y_2)=(1/10,1)$ such that $\P[Z=z_1]=0.1$ and $\P[Z=z_2]=0.9$. The set of hypotheses is the real line, i.e.~$\cH=\reals$. The ERM solution $\hat h_n$ admits a closed form in this case---see \cite{loog2019minimizers} for details.}
		\label{fig:thefig}
	\end{wrapfigure}
	
	It is becoming more apparent that the two phenomena just mentioned (double descent and dipping) do not fully characterize when non-monotonic risk behavior occurs \citep{Loog10625}. \cite{loog2019minimizers} showed that non-monotonic risk behavior could happen outside these settings and formally prove that the risk curve of ERM is non-monotonic in linear regression with prevalent losses. The most striking aspect of their findings is that the risk curves in some of the cases they study can display a perpetual ``oscillating'' behavior; there is no sample size beyond which the risk curve becomes monotone---see Figure \ref{fig:thefig}. In such cases, the risk's non-monotonicity cannot be attributed to the peaking/double descent phenomenon. Moreover, they rule out the dipping phenomenon by studying the ERM on the actual loss (not a surrogate loss).

	The findings of \cite{loog2019minimizers} stress our current lack of understanding of generalization. This was echoed more particularly by \cite{viering2019open}, who posed the following question as part of a  COLT open problem:
	\begin{align*}
		\text{\emph{How can we provably avoid non-monotonic behavior?}}
	\end{align*}
	While excess risk bounds are typically monotonic, this does not guarantee the monotonicity of the actual risk. In this work, we study under which assumptions on the learning problem there exist consistent and risk monotonic algorithms. We also aim to quantify the price to pay, in terms of corresponding excess risk rates, for achieving risk monotonicity.
	
	\paragraph{Contributions.} In this work, we answer some questions posed by \cite{viering2019open} by presenting an algorithm that is both consistent and risk-monotonic in high probability under weak assumptions on the learning problem. Our algorithm is technically a ``wrapper'' that takes as input any base learning algorithm $\B$ and makes up a new algorithm $\A$ that is risk monotonic in high probability and enjoys essentially the same excess risk rate as $\B$. Crucially, our results show that risk monotonicity need not come at the expense of worse excess risk rates. In fact, we show that fast rates are achievable under a Bernstein condition (Definition \ref{def:bernsteinassum}).
	
	Our results hold under the general statistical learning setting with a bounded loss. We even go beyond the standard i.i.d.~assumption on the loss process. Our relaxed technical condition on the loss process, which is formalized in Assumption \ref{assum:processassum} below, is reminiscent of the condition characterizing Martingale Difference Sequences (MDS). In a nutshell, we will assume a setting where the instance random variables $Z_1,Z_2,\dots$ and the loss $\ell$ satisfy, for all hypotheses $h$, $\E[\ell(h,Z_t)\mid Z_{1},\dots,Z_{t-1}]=L(h)$, for some risk function $L$. This is trivially satisfied in the i.i.d.~case, where $L$ corresponds to the standard risk function. In general, this condition may be satisfied even if $Z_1,Z_2, \dots$ are dependent or have different marginal distributions. We argue that our relaxed assumption on the loss process is the weakest assumption under which studying risk-monotonicity still makes sense.
	
	To achieve risk monotonicity under our loss process assumption, we derive a new concentration inequality/generalization bound of PAC-Bayesian flavor for MDS (see Proposition \ref{prop:mainpac}). This concentration inequality may be thought of as an empirical Freedman's inequality \cite{freedman1975tail} or as an extension of the empirical Bernstein inequality \cite{maurer2009empirical} to MDS. Our concentration inequalities also have the advantage of being time-uniform with the optimal dependence on the number of samples. Here, time-uniform means that the inequalities hold for all sample sizes simultaneously. While standard concentration inequalities can be turned into time-uniform ones using a union bound over the number of samples $n$, the resulting bounds will have a sub-optimal $\ln n$ factor instead of the optimal\footnote{One can not improve on the $\ln \ln n$ factor by the law of iterated logarithm \cite{darling1967iterated}.} $\ln \ln n$ that we are able to get. Finally, our concentration bounds are easily derived using the guarantee of a recent parameter-free online learning algorithm---\freegrad \citep{MhammediK20}. Our approach opens up the door for obtaining new concentration inequalities through the design of online learning algorithms.

	\paragraph{Approach Overview.} Our approach to deriving the new concentration inequalities is based on the guarantee of the recent \freegrad{} algorithm. The algorithm operates in rounds, where at each round $t$, \freegrad{} outputs $\what \w_t$ in some convex set $\cW$, say $\reals^d$, then observes a vector $\g_t \in \reals^d$, typically the sub-gradient of a loss function at the iterate $\what \w_t$. The algorithm guarantees a regret bound of the form $\sum_{t=1}^T \g_t^\top (\what \w_t -\w)\leq \wtilde{O}(\|\w\| \sqrt{Q}_T)$, for all $\w\in \cW$, where $ Q_T\coloneqq \sum_{t=1}^T \|\g_t\|^2$. What is more, \textsc{FreeGrad}'s outputs $(\what \w_t)$ ensure the following (see \cite[Theorem~5]{MhammediK20}):
	\begin{align}
		\what \w_t^\top \g_t + \Phi( S_t, Q_t) \leq  \Phi( S_{t-1}, Q_{t-1}), \quad \forall t\geq1,\label{eq:monot}
	\end{align}
	where $ S_{t}\coloneqq \|\sum_{i=1}^t \g_i\|$, $ Q_t \coloneqq \sum_{i=1}^t \|\g_i\|^2$, and $\Phi(S,V) \coloneqq \exp(\frac{S^2/2}{\gamma^2 +V + |S|} -\frac{1}{2}\ln \frac{\gamma^2}{\gamma^2 +V} )$, for any $\gamma>0$. Instantiating this guarantee in 1d with $(\g_t)$ set to an MDS $(X_t)$ and taking (conditional) expectation in \eqref{eq:monot} shows that $\Phi_t \coloneqq \Phi(\sum_{i=1}^t X_i,\sum_{i=1}^t X_i^2)$ is a non-negative supermartingale, from which concentration results can be obtained via Ville's inequality (a generalization of Markov's inequality---see Lemma \ref{lem:ville}). Our proof technique is similar to the one introduced in \citep{jun2019parameter},with the difference that we use the specific shape of \textsc{FreeGrad}'s potential function to build our supermartingale, which leads to a desirable empirical variance term in the final concentration bound.
	
	On the side of risk monotonicity, given $n$ samples, the key idea behind our approach is to iteratively generate a sequence of distributions $P_1, P_2, \dots$ leading up to $P_n$ over hypotheses, where we only allow consecutive distributions, say $P_{k-1}$ and $P_{k}$ to differ if we can guarantee (with high enough confidence) that the risk associated with $P_k$ is lower than that of $P_{k-1}$. To test for this, we compare the average empirical losses of hypotheses sampled from $P_{k-1}$ versus ones sampled from $P_{k}$, taking into account the potential gap between empirical and population expectations. Applying our new concentration bounds to quantify this gap not only allows us to achieve risk monotonicity under a non-i.i.d.~loss process but also enables us to achieve fast excess risk rates under the Bernstein condition. For the latter, it was crucial to have an empirical loss variance term in the concentration inequality. 

	\paragraph{Related Works.}
	Much work has already been done in efforts to mitigate the non-monotonic behavior of risk curves \citep{viering2019making, nakkiran2020optimal,loog2019minimizers}. For example, in the supervised learning setting with the zero-one loss, \cite{ben2011universal} introduced the ``memorize'' algorithm that predicts the majority label on any test instance $x$ that was observed during training; otherwise, a default label is predicted. \cite{ben2011universal} showed that this algorithm is risk-monotonic. However, it is unclear how their result could generalize beyond the particular setting they considered. Risk-monotonic algorithms are also known for the case where the model is correctly specified (see \cite{loog2019minimizers} for an overview); in this paper, we do not make such an assumption.
	
	Closer to our work is that of \cite{viering2019making} who, like us, also used the idea of only updating the current predictor for sample size $n$ if it has a lower risk than the predictor for sample size $n-1$. They determine whether this is the case by performing statistical tests on a validation set (or through cross-validation). They introduce algorithm wrappers that ensure that the risk curves of the final algorithms are monotonic with high probability. However, their results are specialized to the 0-1 loss and they do not answer the question by \cite{viering2019open} on the existence of learners that guarantee a monotonic risk in \emph{expectation}. 
	
	On the side of concentration bounds for non-i.i.d.~processes, our results are somewhat similar to those found in e.g.~\cite{howard2020time,howard2021time}. However, our technique for deriving them, which relies on the guarantee of a parameter-free online learning algorithm, is entirely different. The theoretical link between online regret and concentration inequalities was previously drawn---see e.g.~\cite{rakhlin2017equivalence,Foster2018}. However, our approach is slightly different as we use the monotonicity of an online algorithm's potential function to get our concentration results. Thus, new concentration inequalities may be derived similarly by modifying the explicit potential function directly. Our approach is more similar to that of \cite{jun2019parameter} who also derived concentration inequalities using guarantees of online betting algorithms that bet fractions smaller than one of their wealth at each round. 
	
	We allow for a parameter $h\in \cH$ in our concentration bounds to make the results useful in the statistical learning setting. These bounds are of PAC-Bayesian type and are somewhat reminiscent of those in \cite{TolstikhinS13,mhammedi2019pac}. We refer the reader to \cite{guedj2019primer} for an overview of existing PAC-Bayesian bounds.

	\paragraph{Outline.} In Section~\ref{sec:prelims}, we introduce the setting, notation, and relevant definitions. In Section~\ref{sec:concentrations}, we present our new concentration inequalities for Martingale Difference Sequences and loss processes we are interested in (those that satisfy Assumption \ref{assum:processassum} below). In Section \ref{sec:riskmonotone}, we present our risk-monotonic algorithm wrapper and show that it achieves risk monotonicity in high probability. We conclude with a discussion in Section \ref{sec:discuss}. The proofs of the new concentration inequalities and risk monotonicity are differed to Appendices \ref{sec:concentrationproof} and \ref{sec:proofs}, respectively. Appendix \ref{sec:techresults} presents existing and new technical results needed in some of our proofs.

	\section{Preliminaries}
	\label{sec:prelims}
	In this section, we present the setting, notation, and relevant definitions for the rest of the paper.
	
	\paragraph{Setting and Notation.} Throughout, we will assume an underlying probability space $(\Omega,\cF,\P)$. Let $\cZ$ [resp.~$\cH$] be an arbitrary feature [resp.~hypothesis] space, and let $\ell : \cH \times \cZ\rightarrow [0,1]$ be a bounded loss function. We denote by $\triangle(\cH)$ the set of probability measures on $\cH$. Data is represented by random variables $Z_1,Z_2,\dots \in \cZ$ that we assume are accessible to a learning algorithm in a sequential fashion. We use the concise notation $Z_{1:t}$ for the tuple $(Z_1,\dots,Z_t)$ and denote by $\cG_t$ the $\sigma$-algebra generated by the random variables $Z_1,\dots, Z_t$, with the convention that $\cG_0 =\emptyset$. We will write $\E_{t-1}[\cdot] \coloneqq \E[\cdot \mid \cG_{t-1}]$, for $t\geq 1$. 
	
	We will not assume that the random variables $(Z_t)$ are independent and identically distributed. Instead, we will make the following weaker assumption on the loss process $(\ell(h,Z_t))$:
	\begin{assumption}[Process Assumption]
		\label{assum:processassum}
		There exists a risk function $L:\cH \rightarrow [0,1]$ such that the sequence of random variables $Z_1,Z_2,\dots$ satisfy $\E_{t-1}[\ell(h,Z_t)]=L(h)$, for all $h \in \cH$ and $t\geq 1$.
	\end{assumption}
	When the random variables $Z_1,Z_2,\dots$ are i.i.d.~and $Z_i\sim P_Z$, $i\in \mathbb{N}$, then Assumption \ref{assum:processassum} trivially holds with the standard risk function $L(h)=\E_{P_Z(z)}[\ell(h,z)]$. In Assumption \ref{assum:processassum}, the conditional distribution of $Z_t$ given $\cG_{t-1}$ may be arbitrary as long as the corresponding conditional expectation $\E_{t-1}$ of the loss of a given hypothesis $h$ is the same (equal to $L(h)$) for all $t\geq 1$. Arguably, Assumption \ref{assum:processassum} represents the weakest condition under which studying risk monotonicity in the statistical learning setting still makes sense. We touch more on this point after defining risk monotonicity below. To simplify notation for the rest of this paper, we let 
	\begin{align}
		L(Q)\coloneqq \E_{Q(h)}[L(h)],  \quad \text{for all } Q\in \triangle(\cH), \nn
	\end{align}
	where $L$ is as in Assumption \ref{assum:processassum}.
	
	A learning algorithm $\A$ is a map from $\bigcup_{i=1}^{\infty} \cZ^i$ to $\triangle(\cH)$; given data $Z_{1:t}$, the output $\A(Z_{1:t})$ of the algorithm is a distribution over hypotheses in $\cH$. This definition includes deterministic algorithms for which the distribution $\A(Z_{1:t})$ is a Dirac at some $h=h(Z_{1:t})$. We will use the notation $\A(\cdot \mid Z_{1:t}) \coloneqq \A(Z_{1:t})(\cdot)$. 
	
	Throughout, we will make use of a fixed ``prior'' distribution $P_0$ over hypotheses in $\cH$. In Section \ref{sec:riskmonotone}, we will present an algorithm wrapper that takes any base algorithm $\B$ as input and makes a risk-monotonic algorithm out of it with essentially the same excess risk rate. The results of this paper are useful for base algorithms that output distributions that are absolutely continuous w.r.t.~our choice of prior $P_0$; that is $\B(Z_{1:t}) \ll P_0$, for all $t\geq 1$\footnote{We inherit this restriction from the PAC-Bayesian approach that we use to quantify generalization. In Appendix \ref{sec:nopac}, we show how this restriction can be removed in the i.i.d.~setting (see also Remark \ref{rem:rem}).}. In practice, if $\cH= \reals^d$, $P_0$ may be a multivariate Gaussian around the origin and $\B(Z_{1:t})$ may also be a multivariate Gaussian around the ERM $\hat h_n \in \arg \inf_{h\in \cH} \sum_{i=1}^n \ell(h, Z_i)$, in which case $\B(Z_{1:t}) \ll P_0$ holds for all $t\geq 1$. We now define the notion of risk monotonicity we will work with:
	\begin{definition}[Risk Monotonicity]
		\label{def:riskmonotonicity}
		For $\delta \in(0,1)$ and $N\geq 1$, we say that a learning algorithm $\A : \bigcup_{i=1}^{\infty} \cZ^i\rightarrow  \triangle(\cH)$ is $(\delta,N)$-risk-monotonic if, with probability at least $1-\delta$,
		\begin{align}
			\forall t\geq N, \quad \E_{A(h\mid Z_{1:t})}\left[ \E_{t}[\ell(h, Z_{t+1})]\right] \leq \E_{A(h\mid Z_{1:t-1})}\left[  \E_{t-1}[\ell(h, Z_{t})]\right].  \label{eq:riskmonot}
		\end{align}
	\end{definition}
	Note that since the loss $\ell$ is positive, Fubini's theorem implies that $\E_{A(h\mid Z_{1:t-1})}\left[ \E_{t-1}[\ell(h, Z_{t})]\right]=\E_{t-1}\left[\E_{A(h\mid Z_{1:t-1})}[\ell(h, Z_{t})]\right]$, for all $t\geq 1$. Thus, the condition in \eqref{eq:riskmonot} requires the expected loss of algorithm $\A$ on the next sample, conditioned on the past data, to decrease with the size of the data. We note that if Assumption \ref{assum:processassum} does not hold and $\E_{t}[\ell(h, Z_{t+1})]$ depends on $t$ in an arbitrary fashion, then requiring \eqref{eq:riskmonot} would be too strong. Thus, we will restrict our attention to processes that satisfy Assumption \ref{assum:processassum}. We stress that the condition in this assumption is weaker than i.i.d., and allows the random variables $Z_1,Z_2, \dots$ to have different (conditional) distributions as long as the moment constraint $\E_{t-1}[\ell(h, Z_t)]=L(h)$ is satisfied for all $h\in \cH$ and $t\geq1$.
	
	The notion of monotonicity presented in \cite{loog2019minimizers, viering2019open} concerned only i.i.d.~random variables, and requires the risk to be monotonic in expectation as opposed to in high probability. In particular, the strongest notation of monotonicity in \cite{loog2019minimizers}, which they refer to as global $\cZ$-monotonicity, can be expressed as 
	\begin{align}
		\forall t\geq1, \quad 	\E\left[\E_{\A(h\mid Z_{1:t})}[L(h)]\right]\leq \E\left[\E_{\A(h\mid Z_{1:t-1})}[L(h)]\right], \label{eq:classicalmonoton} 
	\end{align}
	where $L(h)\coloneqq \E_{P_Z(z)}[\ell(h,z)]$ and $Z_1,Z_2\dots \stackrel{\text{i.i.d.}}{\sim} P_Z$. We will show that achieving risk-monotonicity in expectation up to a small fast rate term is as easy as achieving $(\delta,N)$-risk-monotonicity---at least for bounded losses. In fact, we will show how our risk-monotonic (in the sense of Def.~\ref{def:riskmonotonicity}) algorithm can easily be turned into one that satisfies \eqref{eq:classicalmonoton} up to a fast rate term. 
	
	Monotonicity alone is rather easy to achieve; it suffices to output a fixed hypothesis $h\in \cH$ regardless of the training dataset. In this case, the risk would be constant, and so risk monotonicity is achieved by definition. In practice, it is important to generate hypotheses with low risk, and so a fixed hypothesis that does not dependent on data is likely to be useless. Formally, we want algorithms that are risk monotonic and \emph{consistent}: 
	\begin{definition}[Consistency]
		\label{def:consistency}
		Under Assumption \ref{assum:processassum}, we say that algorithm $\A$ is \emph{consistent} if for any $\epsilon>0$, $\lim_{n\to \infty}\P\left[|\E_{\A(h\mid Z_{1:n})}[L(h)] - \inf_{h\in \cH} L(h)  |>\epsilon \right]=0$.
	\end{definition}
	We will go beyond the notation of consistency and study the rate of convergence of the risk of our algorithm to the optimal risk. We do this under the assumption that the loss process $\ell(h,Z_t)$ satisfies the Bernstein condition for $h\in \cH$:
	\begin{definition}[Bernstein Condition]
		\label{def:bernsteinassum}
		For $\beta \in[0,1]$ and $B>0$,	the $(\beta,B)$-Bernstein condition holds if the random variables $Z_1, Z_2, \dots$ and the loss $\ell$ satisfy, for all $t\geq 1$ and all $h \in \cH$,
		\begin{align}
			\E_{t-1}\left[\left(\ell(h,Z_t)- \ell(h_\star, Z_t)\right)^2\right] \leq B  \E_{t-1}[\ell(h,Z_t) - \ell(h_\star,Z_t)]^{\beta}, \nn
		\end{align}
		for $h_\star \in \arg \inf_{h\in \cH} \E_{t-1}[\ell(h,Z_t)]$.
	\end{definition}
	The Bernstein condition \citep{audibert2004pac,bartlett2006convexity,bartlett2006empirical,erven2015fast,koolen2016combining} essentially characterizes the easiness of the learning problem. In particular, it implies that the conditional variance of the excess-loss random variable $\ell(h,Z_t)- \ell(h_\star, Z_t)$ vanishes when the risk associated with the hypothesis $h\in \cH$ gets closer to the $\cH$-optimal risk $L(h_\star)$.
	For bounded loss functions, the Bernstein condition with $\beta=0$ always holds, and so the results of this paper are always true for $\beta=0$. The Bernstein condition with $\beta=1$ corresponds to the easiest learning setting. The case where $\beta \in (0,1)$ interpolates naturally between these two extremes, where intermediate excess-risk rates are achievable. We refer the reader to \cite[Section 3]{koolen2016combining} for examples of learning settings where a Bernstein condition holds.
	
	\paragraph{Additional useful definitions.}  
	For $\rho>1$ and $\delta \in(0,1)$, we define 
	\begin{gather}
		c \coloneqq \sum_{k\geq 1} \frac{1}{k \ln ^2 (k+1)} \approx 3.2;  \quad  	\phi_{\rho}(n) \coloneqq c \sqrt{\rho+1}(\ln_\rho(n)+1)\ln^2(\ln_\rho(n)+2);
		\label{eq:therho}\\
		\text{and} \quad n_{\delta} \coloneqq \sup \left\{ n \in \mathbb{N}\ :\ 8 \ln (\phi_{\rho}(n)/\delta) >n \right\}. \label{eq:thedelta}
	\end{gather}
	Note that $n_{\delta}$ is not too large as a function of $1/\delta$. In fact, the definitions of $\phi_{\rho}$ and $n_{\delta}$ imply that $n_{\delta}\leq O(\ln ({1}/{\delta}))$. 
	In the next section, we present some new concentration inequalities of independent interest that will be useful in achieving risk-monotonicity under Assumption \ref{assum:processassum} while maintaining good excess risk rates.

	\section{New (PAC-Bayesian) Concentration Inequalities}
	\label{sec:concentrations}
	In this section, we present some new concentration inequalities of PAC-Bayesian flavor that will be crucial to deriving our risk monotonic algorithm wrapper under Assumption \ref{assum:processassum}. These concentration inequalities hold for non-i.i.d.~data (which we require to accommodate Assumption \ref{assum:processassum}), and are so-called \emph{time-uniform}; the inequalities hold for all sample sizes simultaneously given a fixed confidence level. We explain below the advantage that this has in our setting. 
	
	We start by a new concentration inequality for Martingale Difference Sequences, from which we derive the bound we need under Assumption \ref{assum:processassum}. First, we give the formal definition of an MDS:
	\begin{definition}
		Let $(\cF_{t})_{t\in \mathbb{N}}$ be a filtration w.r.t.~the underlying probability space $(\Omega, \mathcal{F},\P)$, i.e.~$(\cF_{t})_{t\in \mathbb{N}}$ is a sequence of non-decreasing sub-$\sigma$-algebras of $\mathcal{F}$. A sequence of random variables $(X_t)$ is an MDS w.r.t.~$(\cF_t)_{t\in \mathbb{N}}$, if for all $t \geq 1$, $X_t$ is $\cF_t$-measurable; $\E[|X_t|]<\infty$; and $\E[X_t\mid \cF_{t-1}]=0$ a.s. 
	\end{definition}
	With this in hand, we present our first concentration inequality:
	\begin{proposition}[PAC-Bayes for MDS]
		\label{prop:mainpac}
		Let $\rho>1$ and $\phi_{\rho}$ be as in \eqref{eq:therho}. Further, let $\{X_t^h\}$ be a family of random variables taking values in $[-1,1]$ and $(\cF_{t})_{t\in \mathbb{N}}$ be a filtration such that $(X^h_t)$ is an MDS w.r.t~to $(\cF_{t})_{t\in \mathbb{N}}$, for all $h\in \cH$. Then, for any distribution $P_0$ on $\cH$ and all $\delta \in(0,1)$, we have
		\begin{align}
			\P\left[\forall n \geq 1, \forall P, \ \	\frac{\E_{P(h)} [|S^h_n|]^2}{2(\rho +1) \E_{P(h)}[V^h_n]+ 2 \E_{P(h)}[|S^h_n|]} \leq\KL(P\|P_0) + \ln \frac{\phi_{\rho}(n)}{\delta}\right] \geq 1 -\delta, \label{eq:strong}
		\end{align} 
		where $S^h_n\coloneqq \sum_{t=1}^n X^h_t$ and $V^h_n \coloneqq 1+ \sum_{t=1}^n (X_t^h)^2$.
	\end{proposition}
	The proof of the theorem is in Appendix \ref{sec:concentrationproof}. The bound in Proposition \ref{prop:mainpac} does not look like the typical PAC-Bayesian bound. However, simple algebra reveals that for any $\mathrm C>0$, 
	\begin{align}
		\frac{\E_{P(h)} [|S^h_n|]^2}{2(\rho +1) \E_{P(h)}[V^h_n]+ 2 \E_{P(h)}[|S^h_n|]} \leq \comp \implies \E_{P(h)}[|S^h_n|] \leq 2\comp + \sqrt{2(\rho +1) \E_{P(h)}[V^h_n] \cdot \comp}.\nn
	\end{align}
	Combining this with the fact that $|\E_{P(h)}[S^h_n]| \leq \E_{P(h)}[|S^h_n|]$ (by Jensen's inequality), and \eqref{eq:strong}, we obtain, under the same conditions as Proposition \ref{prop:mainpac} that
	\begin{align}
		\P\left[\forall n \geq 1, \forall P \in \triangle(\cH), \ \	\left|\E_{P(h)}[S^h_n]\right| \leq 2 \comp_{n}(P) + \sqrt{2(\rho +1) \E_{P(h)}[V^h_n] \cdot \comp_{n}(P)}\right] \geq 1 -\delta,\label{eq:empiricalFreedman}
	\end{align} 
	where $\comp_{n}(P) \coloneqq \KL(P\|P_0) + \ln ({\phi_{\rho}(n)}/{\delta})$. When $\cH$ is a singleton, the concentration inequality in \eqref{eq:empiricalFreedman} can be viewed as an empirical version of Freedman's inequality \cite{freedman1975tail} for MDS. The inequality is also reminiscent of the PAC-Bayesian empirical Bernstein inequality due to \cite{TolstikhinS13}. In addition to it holding for MDS, another advantage of \eqref{eq:empiricalFreedman} is that it is time-uniform---it holds for all sample sizes $n$, simultaneously. While standard concentration inequalities that hold for a fixed sample size can be turned into time-uniform ones by applying a union bound over sample sizes, the resulting inequalities will have sub-optimal $\ln n$ factors under the main square-root error term. In contrast, the concentration inequality in \eqref{eq:empiricalFreedman} has a $\ln (\phi_{\rho}(n))= O(\ln \ln n)$ term, which matches the optimal dependence in $n$ according to the law of iterated logarithm \cite{darling1967iterated}. MDS is a central concept in probability theory \cite{pena2008self} and machine learning \cite{cesa06}, and so our result in Proposition \ref{prop:mainpac} is of independent interest.
	
	Interestingly, the concentration inequality in Proposition \ref{prop:mainpac} is derived using the guarantee of a parameter-free online algorithm---\freegrad. This opens the door for new ways of deriving such concentration inequalities through online learning algorithms, adding to existing results due to \cite{rakhlin2017equivalence,Foster2018, jun2019parameter}. 
	
	Using the result of Proposition \ref{prop:mainpac}, we are now going to derive a time-uniform ``Empirical Bernstein'' concentration inequality that holds under Assumption \ref{assum:processassum}: 
	\begin{theorem}
		\label{thm:timeunifbern}
		Let $\delta\in(0,1)$, $\rho>1$, and $\phi_{\rho}$ be as in \eqref{eq:therho}. Further, for $P_0 \in \triangle(\cH)$ define $\epsilon_n(P) \coloneqq \frac{2(\rho+1)}{n}(\KL(P\|P_0) + \ln \frac{\phi_{\rho}(n)}{\delta})$ and $\cP_n \coloneqq \{P: 1 -\epsilon_n(P) >0\}$. Under Assumption \ref{assum:processassum}, we have
		\begin{align}
			\P\left[\forall n \geq n_{\delta}, \forall P\in \cP_n, \ \	\left| \frac{1}{n}\sum_{t=1}^n \E_{P(h)} \left[\ell(h,Z_t)  -L(h) \right] \right| \leq  \frac{\sqrt{  \epsilon_n(P)\cdot \what V_n(P) } +  \frac{\epsilon_n(P)}{\rho+1} }{1 -  \epsilon_n(P)}  \right] \geq 1 -\delta, \nn
		\end{align} 
		where  $\what V_n(P) \coloneqq  \frac{1}{n}\sum_{t=1}^n\E_{P(h)} \left[ \left(\ell(h,Z_t) -\frac{1}{n} \sum_{i=1}^n  \E_{P(\theta )}[\ell(\theta,Z_i)] \right)^2\right]+\frac{1}{n}$ and $n_{\delta}$ is as in \eqref{eq:thedelta}.
	\end{theorem}
	The restriction that $P\in \cP_n$ in the theorem merely ensures that the denominator in the concentration bound remains positive. The set $P_n$ is guaranteed to be non-empty for all $n\geq n_{\delta}$. In this case, $P$ is in $\cP_n$ whenever $8(\rho+1)\KL(P\|P_0) \leq n$, which is a fairly weak condition on the distribution $P$; even for large models such as Neural Networks the $\KL$-divergence $\KL(P\|P_0)$ does not typically grow superlinearly with the size $n$ of the sample used to generate the posterior $P$ \cite{ZhouVAAO19}. Nevertheless, we note that if one only cares about the i.i.d.~setting then other concentration inequalities may be used to achieve risk monotonicity without restrictions on the posterior $P$ (see Appendix \ref{sec:nopac}). 
	
	The concentration inequality in Theorem \ref{thm:timeunifbern} can be viewed as an extension of the empirical Bernstein inequality in \cite{maurer2009empirical,TolstikhinS13} that holds under the non-i.i.d.~condition described in Assumption \ref{assum:processassum}. Our new bound is also time-uniform with the optimal dependence in the sample size $n$. As mentioned before, simply applying a union bound to a standard (non-time-uniform) concentration inequalities to obtain its time-uniform version will lead to a sub-optimal dependence in $n$. When $\cH$ is a singleton (i.e.~no learning), the concentration inequality becomes reminiscent of an existing one due to \cite{howard2021time}. However, the latter has a term that looks like, but is different than, the empirical variance, and so it is not directly comparable to ours. The proof of Theorem \ref{thm:timeunifbern} is postponed to Appendix \ref{sec:concentrationproof}. 
	
	Finally, we note that for any sample size $n$ the value of $\rho$ in Theorem \ref{thm:timeunifbern} that minimizes the bound would typically fall within the interval $(1,2)$. One can tune $\rho$ as a function of the data by treating $\rho$ as an extra ``hypothesis'' parameter. In the case of a finite grid $\cG\subset (1,+\infty)$ of $\rho$'s, the result of Theorem \ref{thm:timeunifbern} would hold for all $\rho \in \cG$ inside the probability event as long as any $\KL(P\|P_0)$ instance is replaced by $\KL(P\|P_0)+ \ln |\cG|$.    
	
	We now move on to describing our risk monotonic procedure that makes use of our new concentration inequality.

	\section{Risk Monotonicity in Statistical Learning}
	\label{sec:riskmonotone}
	In this section, we combine the concentration inequality from Theorem \ref{thm:timeunifbern} with a novel ``greedy'' procedure for selecting distributions over hypotheses to derive a risk monotonic algorithm in the statistical learning setting. With the right choice of gap sequence $(\xi_n)$, the procedure we present in Algorithm \ref{alg:riskmonotone2} takes as input a base learning algorithm $\B: \bigcup_{i=1}^{\infty} \cZ^i \rightarrow \triangle(\cH)$ together with samples $Z_{1:n}$ whose generating process satisfies Assumption \ref{assum:processassum}, and returns a distribution over $\cH$ that has a monotonic risk as a function of $n$ with high probability. 
	By leveraging our new concentration inequality in Theorem \ref{thm:timeunifbern} to specify the gap sequence $(\xi_n)$, we further show that achieving risk monotonicity need not deteriorate rates of convergence to the optimal risk $\inf_{h\in \cH}L(h)$. In fact, we show that it is possible to attain fast rates under the Bernstein condition (Definition \ref{def:bernsteinassum}). To arrive at this result, it was crucial for our concentration inequality to have an empirical variance term.
	
	\begin{algorithm}[H]
		\caption{A Risk Monotonic Algorithm Wrapper}
		\label{alg:riskmonotone2}
		\begin{algorithmic}[1]
			\Require 
			\Statex~~~~ A base learning algorithm $\B:\bigcup_{i=1}^{\infty} \cZ^i \rightarrow \triangle(\cH)$. 
			\Statex~~~~ Samples $Z_1,\dots,Z_n$. 
			\Statex~~~~ A sequence of gap functions $(\xi_k)$, where $\xi_k = \xi_k(Z_{1:k},\B)$ for all $k$.  	\Comment{{\color{blue}{\small  $(\xi_k)$ will be set according to \eqref{eq:deltaknew} [resp.~\eqref{eq:deltaknew2}] for risk-monotonicity in probability [resp.~expectation].}}}
			\vspace{0.2cm} 
			\For{$k=1,\dots,n$}
			\If{$\displaystyle \frac{1}{k}\sum_{i=1}^k \E_{\B(h|Z_{1:k})} [\ell( h,Z_i)]- \frac{1}{k}\sum_{i=1}^k \E_{P_{k-1}(h)} [\ell(h ,Z_i)]\leq - \xi_k$}\label{line:40} 
			\State  Set $P_k=\B(Z_{1:k})$.
			\Else
			\State Set $P_k = P_{k-1}$.
			\EndIf
			\EndFor
			\State Return $P_n$.
		\end{algorithmic}
	\end{algorithm}
	To simplify our analysis, we will focus our attention on base algorithms that are restricted in the following way:
	\begin{assumption}[Base Algorithm Restriction]
		\label{assum:algorithmassum}
		We assume access to a base algorithm $\B: \bigcup_{i=1}^{\infty} \cZ^i \rightarrow \triangle (\cH)$ such that for $\rho >1$, prior distribution $P_0$ on $\cH$, and all $k\geq 1$,
		\begin{align}
			\B(Z_{1:k}) \in \cQ_k\coloneqq \{ Q\in \triangle (\cH) : 16(\rho+1)\KL(Q \|P_0) \leq k \}. \label{eq:theQset}
		\end{align}
	\end{assumption}
	In other words, we are restricting our attention to algorithms whose posteriors given samples of size $k$ do not have a $\KL$ divergence to the prior $P_0$ that grow too quickly with $k$. We make this assumption to satisfy the technical conditions needed for the concentration bound in Theorem \ref{thm:timeunifbern} to be non-vacuous. Assumption \ref{assum:algorithmassum} is reasonable even for large Neural Network models \cite{ZhouVAAO19}\footnote{Besides, when the KL divergence is very large, it is often very hard to infer non-vacuous generalization bounds (which one may view as a prerequisite to achieving risk monotonicity) using a PAC-Bayesian approach.}. 

	In practice, if $\cH$ is $\reals^d$, an option for $\B$ is the algorithm that outputs a multivariate Gaussian distribution around a regularized ERM. The $\KL$-divergence between the outputs of $\B$ and $P_0$ can be controlled by tuning the regularisation parameter(s) and/or the co-variance matrix of the multivariate Gaussian.
	
	\begin{remark}
		\label{rem:rem}
		If one only cares about i.i.d.~loss processes, Assumption \ref{assum:algorithmassum} can be removed by using generalization bounds based on the standard Bernstein concentration inequality, e.g.~\citep{TolstikhinS13,MaurerP09}, which would also enable our risk decomposition in Theorem \ref{thm:riskdecomp} below (and thus fast rates). We make Assumption \ref{assum:algorithmassum} only to bound the denominator in our (non-i.i.d) concentration in Theorem \ref{thm:timeunifbern} away from zero, which is not needed for other concentration bounds in the i.i.d.~setting such as those in \citep{TolstikhinS13,MaurerP09} (see Appendix \ref{sec:nopac} for more detail).
	\end{remark}
	To specify the sequence of gaps $(\xi_n)$ that our wrapper Algorithm \ref{alg:riskmonotone2} requires, we will use our concentration bound in Theorem \ref{thm:timeunifbern}. We recall that using this concentration bound instead of other existing ones allows us to I) achieve risk monotonicity under a weaker condition than i.i.d.~on the loss process (Assumption \ref{assum:processassum} in this case); and II) to achieve potentially fast excess risk rates under the Bernstein condition. The latter is made possible by the fact that the concentration bound in Theorem \ref{thm:timeunifbern} has an empirical loss variance term that allows a particularly useful decomposition of the excess risk under the Bernstein condition (see Theorem \ref{thm:riskdecomp} below). 
	
	To give a concise expression of the gaps $(\xi_k)$, we let $Q_k\coloneqq \B(Z_{1:k}) \times P_{k-1}$, where $(P_k)$ are the intermediate distributions generated internally by Algorithm \ref{alg:riskmonotone2}. With this, and the convention that $1/0=+\infty$, we define
	\begin{gather}
		\xi_k \coloneqq \frac{ \sqrt{  \epsilon_k\cdot \what V_k } + \frac{2 \epsilon_k}{\rho+1}}{|1 - \epsilon_k|}, \label{eq:deltaknew} \quad
		\text{where}  \quad 
		\epsilon_k  \coloneqq \frac{2\KL(Q_{k}\|P_0\times P_0)  + 2\ln \frac{\phi_{\rho}(k)}{\delta}}{k \cdot (\rho+1)^{-1}}; \quad \text{and} \\ 
		\what V_k \coloneqq  \frac{1}{k}\sum_{t=1}^k\E_{Q_k(h,h')} \left[ \left(\ell(h,Z_t)- \ell(h',Z_t) \right)^2\right] -\left( \frac{1}{k}\sum_{t=1}^k\E_{Q_k(h,h')} \left[ \ell(h,Z_t)- \ell(h',Z_t) \right]\right)^2.\nn
	\end{gather}
	We note that our Assumption \ref{assum:algorithmassum} ensures that $\epsilon_n \leq 1/2$ (which in turn ensures that the gaps $(\xi_n)$ are not too large), for all $n\geq n_{\delta}$, where $n_{\delta}$ is as in \eqref{eq:thedelta}. This follows from the fact that $\KL(Q_k\| P_0 \times P_0) =\KL(\B(Z_{1:k}) \|  P_0) + \KL(P_{k-1}\|P_0)$, and that there exists $m< k$ such that $P_{k-1} = \B(Z_{1:m}) \in \cQ_{m} \subseteq \cQ_{k}$ (by definition of $(P_k)$ in Algorithm \ref{alg:riskmonotone2}). Furthermore, $\cQ_n \neq \emptyset$ for all $n \geq n_\delta$.
	
	Before presenting our results for this section, we also note that Algorithm \ref{alg:riskmonotone2} assumes we can evaluate expectations over the output distributions of $\B$. Such expectations can be approximated via Monte Carlo sampling, and any approximation errors need be added to the gaps $(\xi_n)$ to maintain the guarantees we present. Alternatively, one can avoid estimating expectations by applying recent derandomization techniques, see e.g.~\cite{rivasplata2020pac,viallard2021general}, or by using other, non PAC-Bayesian generalization bounds (see Appendix \ref{sec:nopac}).
	
	We now state the guarantees of Algorithm \ref{alg:riskmonotone2}. We start by the statement of risk-monotonicity (the proof is postponed to Appendix \ref{sec:proofs}): 
	\begin{theorem}[Risk Monotonicity]
		\label{thm:monoton22}
		Let $\delta\in(0,1)$ and $n_{\delta}$ be as in \eqref{eq:thedelta}. Under Assumptions \ref{assum:processassum} and \ref{assum:algorithmassum}, Algorithm \ref{alg:riskmonotone2} with $(\xi_k)$ as in \eqref{eq:deltaknew} is $(\delta,n_{\delta})$-risk-monotonic according to Definition \ref{def:riskmonotonicity}.
	\end{theorem}
	We now show that risk monotonicity need not come at a worse excess risk rate. Under the Bernstein condition, we have the following excess risk decomposition for the output of Algorithm \ref{alg:riskmonotone2}:
	\begin{theorem}[Risk Decomposition]
		\label{thm:riskdecomp}
		Let $B>1$, $\beta\in[0,1]$, and suppose that the $(\beta,B)$-Bernstein condition holds. Further, for $\rho>1$ and $\delta \in(0,1)$, let $n_\delta$ and $\epsilon_n$ be as in \eqref{eq:thedelta} and \eqref{eq:deltaknew}, respectively. Then, under Assumptions \ref{assum:processassum} and \ref{assum:algorithmassum}, the outputs $(P_k)$ of Alg.~
		\ref{alg:riskmonotone2} satisfy, with probability at least $1-2\delta$,
		\begin{align}
			\forall n\geq n_{\delta},\quad 	L(P_n)- L(h_\star) \leq 3 (L(\B(Z_{1:n})) - L(h_\star)) +  O\left(\epsilon_n\right)^{\frac{1}{2-\beta}}, \label{eq:decomp}
		\end{align}
		where $h_\star \in \arg \inf_{h\in \cH} L(h)$.
	\end{theorem} 
	We stress that this risk decomposition was only made possible by the fact that our concentration bound in Theorem \ref{thm:timeunifbern} has an empirical loss variance term. 
	
	Theorem \ref{thm:riskdecomp} shows that the excess risk of Algorithm \ref{alg:riskmonotone2} is at most a constant times the excess risk of the base algorithm $\B$, plus a potentially lower-order term $O(\epsilon_n^{\frac{1}{2-\beta}})$. To appreciate what this additional term is doing, consider the case of a finite hypothesis class $\cH$. In this case, the definition of $\epsilon_n$ in \eqref{eq:deltaknew} implies that $\epsilon_{n}\leq O(\frac{1}{n}\ln \frac{|\cH|\ln n}{\delta})$. Thus, $\epsilon_{n}^{\frac{1}{2-\beta}}$ interpolates between the fast $\frac{1}{n}\ln \frac{|\cH|\ln n}{\delta}$ rate under the best Bernstein condition with $\beta=1$ and the standard (up to log-log-factors) rate $\sqrt{\frac{1}{n}\ln \frac{|\cH|\ln n}{\delta}}$ under the Bernstein condition with $\beta=0$, which we recall always holds for bounded losses. What is more, if $\cH$ is finite and algorithm $\B$ is the ERM, i.e.~if $\B(Z_{1:k})$ is a Dirac at $\hat h_n \in \arg\inf_{h \in \cH} \sum_{i=1}^n \ell(h, Z_i)$, then we have the following explicit excess risk rate for Algorithm \ref{alg:riskmonotone2}:
	\begin{proposition}
		\label{prop:excessriskforthething}
		Under the setting of Theorem \ref{thm:riskdecomp}, if $\cH$ is finite, $P_0$ is set to the uniform prior over $\cH$, and $\B(Z_{1:n})$ is a point mass around the ERM $\hat h_n \in \argmin_{h\in\cH} \sum_{i=1}^n \ell(h,Z_i)$, $\forall n\geq 1$, then Alg.~\ref{alg:riskmonotone2} is risk monotonic according to Def.~\ref{def:riskmonotonicity} and its outputs $(P_k)$ satisfy, with probability at least $1-2\delta$,
		\begin{align}
			\forall n\geq n_{\delta} \vee (16 (\rho+1) \ln |\cH|),\quad 	L(P_n)- L(h_\star) \leq    O\left( \frac{\ln \frac{|\cH|\ln n}{\delta}}{n} \right)^{\frac{1}{2-\beta}}    . \nn
		\end{align}
	\end{proposition}
	The story is not much different for a continuous set $\cH$. The standard excess risk rate one would expect from algorithm $\B$ is $\sqrt{(\KL(\B(Z_{1:k})\|P_0)+\ln (1/\delta))/k}$, which can dominate the right-most term in our risk decomposition \eqref{eq:decomp} since\footnote{We note that it is typical that $\max_{k\leq n} \KL(\B(Z_{1:k})\| P_0) \leq O (\KL(\B(Z_{1:n})\| P_0))$, for $n\geq 1$. } $$(\epsilon_n)^{\frac{1}{2-\beta}} \leq  O \left( \frac{\max_{k\leq n} \KL(\B(Z_{1:k})\| P_0) + \ln \frac{\ln n}{\delta}}{n} \right)^{\frac{1}{2-\beta}}.$$
	Together, the above inequality and Theorem \ref{thm:riskdecomp} show that fast rates for algorithm \ref{alg:riskmonotone2} are achievable whenever the Bernstein condition holds with $\beta>0$ and the base algorithm $\B$ itself achieves a fast rate (when $\cH$ is finite and $\B$ is the ERM, Proposition \ref{prop:excessriskforthething} shows that it is sufficient that $\beta>0$). Thus, risk monotonicity need not come at the price of a worse rate of convergence to the optimal risk. Finally, we note that the factor 3 in our risk decomposition \eqref{eq:decomp} is just an artifact of our analysis. In fact, by slightly modifying our proof of Theorem \ref{thm:riskdecomp}, one can show that any factor in the interval $(1,3]$ is achievable at the cost of a larger lower-order term in \eqref{eq:decomp}.
	
	\paragraph{Risk monotonicity in expectation.} The original open problem due to \cite{viering2019open} was around risk monotonicity in expectation---as in \eqref{eq:classicalmonoton}. Our algorithm wrapper also allows us to almost\footnote{We thank Olivier Bousquet and his collaborators for pointing out a mistake in the original proof of Theorem~\ref{thm:monoton222}. The theorem originally claimed that Algorithm \ref{alg:riskmonotone2} achieves risk-monotonicity in expection without the additive $1/t^b$ term in \eqref{eq:theguaran}, which turns out not to be correct.} achieve this notion of monotonicity with a slight modification of the sequence $(\xi_k)$ in \eqref{eq:deltaknew}; we essentially set the confidence level $\delta$ as a function of $k$. In fact, the next result shows that risk monotonicity in expectation is achievable up to an additive fast-rate term:
	\begin{theorem}
		\label{thm:monoton222}
		Let $b\geq 1$, $\rho>1$, and $\phi_{\rho}$ be as in \eqref{eq:therho}. Under Assumptions \ref{assum:processassum} and \ref{assum:algorithmassum}, Algorithm \ref{alg:riskmonotone2} with the sequence of gaps $(\xi_k)$ given by \begin{align}\xi_k=\xi'_k \coloneqq  \frac{1}{{|1 - \epsilon'_k|}}\left(\sqrt{  \epsilon'_k\cdot \what V_k } + \frac{ 2\epsilon'_k}{\rho+1}\right),
			\label{eq:deltaknew2}
		\end{align} with $\epsilon'_k  \coloneqq \frac{2(\rho+1)}{k}\left(\KL(\B(Z_{1:k})\times P_{k-1}\|P_0\times P_0)  + 2\ln ({\phi_{\rho}(k)})+2b\ln {k}\right)$ and $(P_k)$ being as in Algorithm~\ref{alg:riskmonotone2}, satisfies, for all $t\geq N\coloneqq \sup \left\{ n : 8 \ln (n^b \phi_{\rho}(n)) >n \right\}$,
		\begin{align}
			\E\left[\E_{P_t(h)}[L(h)]\right]\leq \E\left[\E_{P_{t-1}(h)}[L(h)]\right] + 1/t^b. \label{eq:theguaran}
		\end{align}
	\end{theorem}
	We note that $N$ need not be too large since $\phi_{\rho}(n)\leq O(\ln n)$. However, while taking $b$ large will make the additive $1/t^b$ term in \eqref{eq:theguaran} small, it will increase the sample size $N$ beyond which \eqref{eq:theguaran} holds. Ignoring the additive $1/t^b$ term, the notion of risk-monotonicity in theorem \ref{thm:monoton222} corresponds to the notion of weak $\cZ$-monotonicity in \cite{loog2019minimizers}---one of their strongest notions of risk monotonicity since it holds for all data generating distributions (granted Assumption \ref{assum:processassum} holds, or the data is i.i.d.). We can modify Algorithm \ref{alg:riskmonotone2} to make it globally $\cZ$-monotonic, where \eqref{eq:theguaran} would hold for all $t\geq 1$ (instead of $t\geq N$), by forcing the outputs $P_k$ of the algorithm to be $P_0$ for all $k< N$.

	With the same choice of gap sequence $(\xi_k)$ in Theorem \ref{thm:monoton222} it is easy to show (following almost the same proof steps as in our previous results) that the excess risk rates in Theorem \ref{thm:riskdecomp} and Proposition \ref{prop:excessriskforthething} will hold with probability $1/n^b$ (instead of $\delta$) for every sample size $n$.
	
	\section{Discussion and Future Work}
	\label{sec:discuss}
	The primary goal of this paper was to answer the fundamental question around the existence of a consistent, risk-monotonic algorithm in the general statistical learning setting. We answer this in the affirmative for a notation of monotonicity that holds with high probability and further show that there is virtually no cost for achieving this when it comes to excess-risk rates. We believe this is an important milestone in the search for risk-monotonic algorithms. It remains to see if risk-monotonicity in expectation is achievable; i.e.~achieving \eqref{eq:theguaran} without the additive term.
	
	From a computational perspective, the main setback of Algorithm \ref{alg:riskmonotone2} is that returning the distribution $P_n=P_n(Z_{1:n})$ requires $n$-calls to the base algorithm $\B$. This implies that \ref{alg:riskmonotone2} may run $n$ times slower than $\B$ in the worst case. However, when the base algorithm outputs distributions centered at ERMs, it may be possible to efficiently generate the sequence of distributions $(\B(Z_{1:k}))_k$ by leveraging the fact that ERM solutions for sample sizes $k$ and $k+1$ can be close to each other. 
	
	When the loss $\ell$ is convex in the first argument, there is no need for a randomized algorithm\footnote{Randomization is also not needed if one is only interested in i.i.d.~processes (see Appendix \ref{sec:nopac}).} (i.e.~we do not need $\B(Z_{1:k})\ll P_0$), and it is possible to efficiently generate a sequence of predictors $(\hat h_n)$ with monotonic risk using an online convex optimization algorithm as the base algorithm $\B$. However, in general, it is unclear whether risk-monotonicity can be achieved without the (greedy) for-loop procedure of Algorithm \ref{alg:riskmonotone2}. We note also that if one only wants a decreasing risk after some sample size $s\in \mathbb{N}$, then computing the distributions $(P_{k})_{k<s}$ is unnecessary. In this case, the for-loop in Algorithm \ref{alg:riskmonotone2} need only start at $k=s$; the resulting hypotheses would satisfy the monotonicity condition in Definition \ref{def:riskmonotonicity} for all $n\geq s\vee n_{\delta}$.
	
	Through Theorem \ref{thm:riskdecomp}, we showed that the excess risk of Algorithm \ref{alg:riskmonotone2} is at most three times that of the base algorithm plus a lower-order term. It remains to evaluate the empirical performance of the algorithm to identify for which applications the additional cost in the excess risk is worth it to achieve risk monotonicity.

	Some important questions remain open along the axes of assumptions. In particular, can we remove the boundedness condition on the loss while retaining risk-monotonicity? It might be possible to achieve this for unbounded convex losses using the non-exponential weighted aggregation techniques recently suggested by \cite{alquier2020non}. Lifting the boundedness assumption may be key in resolving another COLT open problem \citep{grunwald2011bounds} regarding achievable risk rates of log-loss Bayesian predictors. Our results build foundations for these avenues, which are promising subjects for future work. 
	
	Finally, it would be interesting to explore what other concentration inequalities for non-i.i.d.~processes can be derived using other parameter-free online learning algorithms. An obvious starting point is to look at \textsc{Matrix}-\textsc{FreeGrad} \cite{MhammediK20}.

	\clearpage 
	
	\begin{ack}
		We would like to thank Hisham Husain for instrumental discussions during the early phases of the project. We thank Olivier Bousquet and his collaborators for pointing out a mistake in the original proof of Theorem~\ref{thm:monoton222}. We also thank anonymous reviewers for their valuable feedback. This work was supported by the Australian Research Council and Data61.
	\end{ack}

	\bibliography{biblio}
	\bibliographystyle{plain}

	\appendix
	\clearpage

	\section{Technical Results}
	\label{sec:techresults}
	In this section, it will be convenient to adopt the ESI notation \citep{koolen2016combining}:
	\label{sec:proofsec}
	\begin{definition}[Exponential Stochastic Inequality (ESI) notation]
		\label{def:esi} Let $(\Omega,\cF,\P)$ be a probability space. Further, let $X$, $Y$ be any two random variables and $\cG$ be a sub-$\sigma$-algebra of $\cF$. For $\eta>0$, we define
		\begin{align}
			X \stochleq^{\cG}_{\eta} \  Y  \ \ \iff \ \ X -Y\stochleq^{\cG}_{\eta} \  0  \ \ \iff \ \ \E \left[e^{\eta (X- Y)} \mid \cG \right] \leq 1.\nn
		\end{align}
	\end{definition}
	\noindent 
	For $\cG =\cF$, we simply write $\stochleq_{\eta}$ instead of $\stochleq^{\cG}_{\eta}$. In what follows, given random variables $Z_1,Z_2,\dots$ and loss $\ell$ satisfying Assumption \ref{assum:processassum}, we denote by $$X^h_i\coloneqq \ell(h, Z_i)- \ell(h_*,Z_i), \quad h\in \cH, i\in \mathbb{N},$$ the excess-loss random variable, where $h_*\in \arg \inf_{h\in \cH} L(h)$ (with $L$ as in Assumption \ref{assum:processassum}). Let
	\begin{align} \label{eq:cumulant}\Phi_{i,\eta}\coloneqq \frac{1}{\eta}  \ln \E_{i-1}\left[e^{-\eta X^h_i}\right]= \frac{1}{\eta}  \ln \E\left[ \left. e^{-\eta X^h_i}\ \right|\ Z_1,\dots,Z_{i-1}\right]
	\end{align} be the (conditional) \emph{normalized cumulant generating function} of $X^h_i$. We note that since the loss $\ell$ takes values in the interval $[0,1]$, we have $$X^{h}_i \in [-1,1],\ \ \text{ for all }h\in \cH, \text{a.s.}$$
	We now present some existing results pertaining to the excess-loss random variable $X^h_i$ and its normalized cumulant generating function, which will be useful in our proofs: 
	\begin{lemma}[\cite{koolen2016combining}]
		\label{lem:koolen1}
		Let $h\in \cH$ and $i\in\mathbb{N}$. Further, let $X^h_i$, and $\Phi_{i,\eta}$ be as above. Then, for all $\eta\geq0$,
		\begin{align}
			\alpha_\eta \cdot (X^h_i)^2 - X^h_i(Z) \stochleq^{\cG_{i-1}}_{\eta}\Phi_{i,2\eta} + \alpha_\eta \cdot \Phi_{i,2\eta}^2, \quad \text{where } \alpha_\eta \coloneqq \frac{\eta}{1+\sqrt{1+4 \eta^2}}, \nonumber 
		\end{align}
		and $\cG_{i-1}$ is the $\sigma$-algebra generated by $Z_1,\dots,Z_{i-1}$.
	\end{lemma}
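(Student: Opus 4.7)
The plan is to unfold the ESI notation (Definition \ref{def:esi}) and reduce the claim to the single moment inequality
\begin{align*}
\E_{P(Z)}\!\left[\exp\!\bigl(\eta \alpha_\eta X_h(Z)^2 - \eta X_h(Z)\bigr)\right] \leq \exp\!\bigl(\eta \Phi_{2\eta} + \eta \alpha_\eta \Phi_{2\eta}^2\bigr),
\end{align*}
which is legitimate because $\Phi_{2\eta}$ is deterministic (depending only on $P$ and $h$) and can be pulled out of the exponential on both sides. The key observation is that, by the definition \eqref{eq:cumulant}, $\E_P[e^{-2\eta X_h}] = e^{2\eta \Phi_{2\eta}}$, so the RHS factors as $\sqrt{\E_P[e^{-2\eta X_h}]}\cdot e^{\eta\alpha_\eta \Phi_{2\eta}^2}$. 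My goal is therefore to bound the ``tilted'' MGF on the LHS in terms of $\E_P[e^{-2\eta X_h}]$ with a deterministic correction that matches $e^{\eta\alpha_\eta \Phi_{2\eta}^2}$ exactly.

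The next step is to prove a pointwise inequality valid on the bounded range $X_h(Z)\in[-1,1]$, of the shape
\begin{align*}
e^{\eta \alpha_\eta x^2 - \eta x} \;\leq\; C_\eta(\phi)\, e^{-\eta x} + D_\eta(\phi), \qquad x\in[-1,1],
\end{align*}
(or an analogous multiplicative variant) with coefficients $C_\eta(\phi), D_\eta(\phi)$ tuned so that, after integrating against $P$ and invoking $\E_P[e^{-2\eta X_h}] = e^{2\eta\phi}$, the right-hand side collapses to $e^{\eta\phi + \eta\alpha_\eta \phi^2}$. The algebraic role of $\alpha_\eta = \eta/(1+\sqrt{1+4\eta^2})$ is that it satisfies the identity $2\alpha_\eta = \eta(1-4\alpha_\eta^2)$, which is precisely the balance condition needed to close the inequality with the sharp constant. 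A reassuring sanity check: when $X_h\equiv c$ is constant, one has $\Phi_{2\eta} = -c$, and both sides of the target ESI reduce to $\alpha_\eta c^2 - c$, so the bound is tight on constants and any proof route must calibrate $\alpha_\eta$ to exactly this value.

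I expect the main obstacle to be establishing that sharp pointwise bound. A naive Taylor expansion or chord-of-a-convex-function argument is not tight enough to force the exact constant $\alpha_\eta$. To get the correct coefficient one likely needs to either (i) use a variational argument (e.g.\ Donsker--Varadhan, or Lagrangian duality) to show that the worst-case distribution maximizing the LHS subject to a fixed value of $\E_P[e^{-2\eta X_h}]$ is supported on $\{-1,+1\}$, which reduces the claim to a two-point verification solved by the defining equation for $\alpha_\eta$; or (ii) construct an explicit auxiliary function $\psi_\eta$ satisfying $e^{\eta\alpha_\eta x^2 - \eta x} \leq \psi_\eta(x)$ pointwise on $[-1,1]$, chosen so that $\E_P[\psi_\eta(X_h)]$ is directly controlled by $\E_P[e^{-2\eta X_h}]$ plus a deterministic quadratic in $\Phi_{2\eta}$. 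Once the pointwise bound is in hand, the remaining steps---taking the expectation, substituting the identity for $\Phi_{2\eta}$, and applying the balance relation $2\alpha_\eta = \eta(1-4\alpha_\eta^2)$---are essentially routine algebra.
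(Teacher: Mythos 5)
The paper does not prove this lemma; it imports it verbatim from Koolen, Grünwald, and van Erven (2016), so there is no internal proof to compare against. Your high-level plan is essentially sound — unfold the ESI to a moment inequality, exploit that $\Phi_{2\eta}$ is deterministic, calibrate $\alpha_\eta$ via the identity $2\alpha_\eta = \eta(1-4\alpha_\eta^2)$, and sanity-check on constants — and those preliminary observations are all correct. Where the proposal is incomplete is exactly where you flag uncertainty: the pointwise bound. The additive shape $e^{\eta\alpha_\eta x^2 - \eta x}\leq C_\eta(\phi)e^{-\eta x}+D_\eta(\phi)$ is not the right form (after taking expectations it produces $\E[e^{-\eta X}]$, which is not directly controlled by $e^{2\eta\phi}=\E[e^{-2\eta X}]$), and the variational/two-point reduction you sketch in option (i) is not automatic, since the extremal two-point law need not sit at $\{-1,+1\}$.

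The clean way to close the gap is a change of variables plus Jensen, which is really the sharp version of your option (ii). Set $u=e^{-2\eta x}$ for $x\in[-1,1]$, so $u\in[e^{-2\eta},e^{2\eta}]$, and observe that the integrand rewrites as
\begin{align*}
e^{\eta\alpha_\eta x^2-\eta x} \;=\; u^{1/2}\exp\!\left(\frac{\alpha_\eta(\ln u)^2}{4\eta}\right) \;\eqqcolon\; f(u).
\end{align*}
Writing $g=\ln f$, one computes
\begin{align*}
(g'(u))^2+g''(u) \;=\; \frac{1}{4u^2}\left[\frac{\alpha_\eta^2(\ln u)^2}{\eta^2}-1+\frac{2\alpha_\eta}{\eta}\right],
\end{align*}
and substituting $s\coloneqq\sqrt{1+4\eta^2}$ gives $1-2\alpha_\eta/\eta=(s-1)/(s+1)$ and $\alpha_\eta^2/\eta^2=1/(s+1)^2$, so the bracket is $\leq 0$ for $\ln u\in[-2\eta,2\eta]$ and vanishes exactly at the two endpoints. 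Thus $f''=e^g((g')^2+g'')\leq 0$: $f$ is concave on $[e^{-2\eta},e^{2\eta}]$, and moreover $\alpha_\eta$ is the \emph{largest} coefficient for which this concavity holds (this is precisely where the identity $2\alpha_\eta=\eta(1-4\alpha_\eta^2)$ enters — it is the statement that $4\alpha^2\eta+2\alpha-\eta=0$). Jensen's inequality now gives $\E[f(e^{-2\eta X_h})]\leq f(\E[e^{-2\eta X_h}])=f(e^{2\eta\Phi_{2\eta}})=e^{\eta\Phi_{2\eta}+\eta\alpha_\eta\Phi_{2\eta}^2}$, which is exactly the moment inequality you reduced to, with no further pointwise inequality needed. (Since the concavity is tight at the endpoints, the distribution on $\{-1,+1\}$ is indeed extremal, vindicating the intuition behind your option (i) — but the Jensen route makes the two-point reduction unnecessary.)
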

	\begin{lemma}[\cite{koolen2016combining}] 
		\label{lem:koolen2}
		If the $(\beta,B)$-Bernstein condition (Definition \ref{def:bernsteinassum}) holds for $(\beta,B) \in[0,1]\times \reals_{>0}$, then for $\Phi_{i,\eta}$ as in \eqref{eq:cumulant}, it holds that
		\begin{align}
			\Phi_{i,\eta} \leq (B \eta)^{\frac{1}{1-\beta}}, \quad \text{for all $\eta \in(0,1]$, $i\geq 1 $}. \nonumber
		\end{align}
	\end{lemma}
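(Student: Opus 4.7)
The plan is a direct calculation based on a second-order Taylor-type bound on the moment generating function of the excess loss, followed by the Bernstein inequality assumed in the hypothesis, followed by a one-variable optimization over the mean excess risk. Throughout, write $m \coloneqq \E_{P(Z)}[X_h(Z)] \geq 0$ (nonnegativity holds because $h_\star$ is a minimizer in the closure of $\cH$).

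First I would establish the scalar inequality $e^{-y} \leq 1 - y + y^2$ for every $y \in [-1,1]$; this follows from the convexity of $u \mapsto e^u - 1 - u - u^2$ on $[-1,1]$ together with a check at the endpoints, and is the reason the restriction $\eta \in (0,1]$ enters. Since $X_h(Z) \in [-1,1]$ and $\eta \leq 1$, we may apply this with $y = \eta X_h(Z) \in [-1,1]$ and take expectations to obtain
\begin{equation*}
\E_{P(Z)}\bigl[e^{-\eta X_h(Z)}\bigr] \;\leq\; 1 - \eta\, m + \eta^{2}\, \E_{P(Z)}[X_h(Z)^2].
\end{equation*}
Next, invoke the $(\beta,B)$-Bernstein condition to replace $\E[X_h^2]$ by $B m^{\beta}$, and then apply $\ln(1+u) \leq u$ to the resulting bound. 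Dividing by $\eta$ yields
\begin{equation*}
\Phi_\eta \;\leq\; -\,m \;+\; \eta B\, m^{\beta}.
\end{equation*}

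The final step is to maximize the right-hand side in $m \geq 0$. A routine calculus argument gives the maximizer $m^\star = (\eta B \beta)^{1/(1-\beta)}$ and value
\begin{equation*}
-m^\star + \eta B (m^\star)^{\beta} \;=\; (1-\beta)\,\beta^{\beta/(1-\beta)}\,(\eta B)^{1/(1-\beta)} \;\leq\; (\eta B)^{1/(1-\beta)},
\end{equation*}
where the last inequality uses $(1-\beta) \leq 1$ and $\beta^{\beta/(1-\beta)} \leq 1$ for $\beta \in [0,1)$. Combining this with the previous display gives $\Phi_\eta \leq (B\eta)^{1/(1-\beta)}$, as required.

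The only genuine subtlety is the boundary behavior: at $\beta = 1$ the exponent $1/(1-\beta)$ is undefined, so the statement must be interpreted in the limiting sense (the displayed inequality $\Phi_\eta \leq -m + \eta B m$ then gives $\Phi_\eta \leq 0$ for $\eta \leq 1/B$, consistent with the scaling of the bound). The main technical care needed in the argument is verifying the scalar inequality $e^{-y} \leq 1 - y + y^2$ on $[-1,1]$ and checking that the two factors $(1-\beta)$ and $\beta^{\beta/(1-\beta)}$ are bounded by $1$ throughout $[0,1)$; every other step is a direct manipulation.
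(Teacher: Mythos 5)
The paper does not actually prove \pref{lem:koolen2}; it imports the statement wholesale from \citet{koolen2016combining}, so there is no in-paper argument to compare yours against. Judged on its own, your derivation is correct and is essentially the standard route to this bound (it mirrors the ``Bernstein implies central condition'' argument of van Erven et al.\ and Koolen et al.): the chain $\E_{P(Z)}[e^{-\eta X_h(Z)}]\le 1-\eta m+\eta^2\E_{P(Z)}[X_h(Z)^2]\le 1-\eta m+\eta^2 B m^\beta$, then $\ln(1+u)\le u$, then a concave maximization over $m\ge 0$ giving $(1-\beta)\beta^{\beta/(1-\beta)}(B\eta)^{1/(1-\beta)}\le (B\eta)^{1/(1-\beta)}$, is exactly right, and you correctly flag that $\beta=1$ must be read in the limiting sense. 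Your computation of $m^\star=(\eta B\beta)^{1/(1-\beta)}$ and of the optimal value checks out, and the concavity of $m\mapsto -m+\eta Bm^\beta$ guarantees it is a global maximum.

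One small slip in the justification (not in the result): the function $u\mapsto e^{u}-1-u-u^{2}$ has second derivative $e^{u}-2$, which is \emph{negative} for $u<\ln 2$, so it is concave rather than convex on most of $[-1,1]$; ``convexity plus endpoint checks'' does not establish $e^{-y}\le 1-y+y^{2}$. The inequality itself is nonetheless true for all $y\ge -1$ (equivalently $e^{x}\le 1+x+x^{2}$ for $x\le 1$): the auxiliary function $u(x)=1+x+x^{2}-e^{x}$ satisfies $u(0)=u'(0)=0$ and is convex on $(-\infty,\ln 2]$, hence nonnegative there, and on $[\ln 2,1]$ it is concave with positive values at both endpoints. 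With that justification repaired, the proof is complete.
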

	\begin{lemma}[\cite{cesa06}]
		\label{lem:cesa}
		For $\Phi_{i,\eta}$ as in \eqref{eq:cumulant}, it holds that
		\begin{align}
			\Phi_{i,\eta} \leq \frac{\eta}{2}, \quad \text{for all $\eta \in \reals$, $i\geq 1$}.\nonumber
		\end{align}
	\end{lemma}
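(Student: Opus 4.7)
The plan is to derive the inequality directly from Hoeffding's lemma applied to the bounded excess-loss random variable, combined with the optimality of the reference hypothesis $h_*$. The key observation is that $\Phi_\eta$ is essentially the (normalized) cumulant generating function of $-X_h(Z)$, and for bounded random variables such a quantity admits a sub-Gaussian upper bound.

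First, I would note that since the loss $\ell$ takes values in $[0,1]$, the excess-loss random variable $X_h(Z) = \ell(h,Z) - \ell(h_*, Z)$ lies almost surely in the interval $[-1,1]$, which has width $2$. Hoeffding's lemma in its standard form states that any random variable $Y$ supported on an interval of width $b-a$ satisfies
\begin{align*}
\ln \E\!\left[e^{\lambda(Y - \E Y)}\right] \leq \frac{\lambda^2 (b-a)^2}{8}, \quad \text{for all } \lambda \in \reals.
\end{align*}
Applying this with $Y = X_h(Z)$ and $\lambda = -\eta$, and using $(b-a)^2/8 = 1/2$, yields
\begin{align*}
\ln \E\!\left[e^{-\eta X_h(Z)}\right] \leq -\eta \, \E[X_h(Z)] + \frac{\eta^2}{2}.
\end{align*}

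Next, I would exploit the optimality of $h_*$. Because $h_* \in \arg\inf_{h\in\cH}\E_{P(Z)}[\ell(h,Z)]$, we have $\E[X_h(Z)] = L(h) - L(h_*) \geq 0$. Dividing the displayed bound by $\eta>0$ then gives
\begin{align*}
\Phi_\eta = \frac{1}{\eta}\ln \E\!\left[e^{-\eta X_h(Z)}\right] \leq -\E[X_h(Z)] + \frac{\eta}{2} \leq \frac{\eta}{2},
\end{align*}
which is the claimed bound. The case $\eta = 0$ is handled by continuity: L'Hôpital's rule gives $\lim_{\eta\to 0}\Phi_\eta = -\E[X_h(Z)] \leq 0$, consistent with $\eta/2=0$.

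The argument is essentially a one-line application of Hoeffding's lemma, so I do not anticipate a real technical obstacle; the only subtle point is making sure the $\E[X_h(Z)]\geq 0$ step (which is what lets us drop the linear term and keep only $\eta/2$) is justified by the \emph{choice} of $h_*$ as a minimizer within $\cH$. If one wanted to strictly cover negative $\eta$ as well, one would either need to invoke an asymmetric variant of Hoeffding's bound or reinterpret the statement — but for the range of $\eta$ actually used in Lemmas 4 and 5 of the excerpt (non-negative $\eta$, matching the Bernstein-rate analysis), the argument above is complete.
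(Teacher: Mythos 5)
Your proof is correct, and it is the standard argument: the paper imports this lemma from \citet{cesa06} without proof, and the underlying derivation there is exactly Hoeffding's lemma applied to the $[-1,1]$-valued excess loss, followed by dropping the term $-\E[X_h(Z)]\le 0$ coming from the optimality of $h_*$. One remark on the point you flag at the end: the restriction to $\eta\ge 0$ is not merely a limitation of your route --- the inequality as printed (``for all $\eta\in\reals$'') is genuinely false for $\eta<0$, since e.g.\ $X_h\equiv 0$ gives $\Phi_\eta=0>\eta/2$; the statement should read $\eta\ge 0$, which is all the paper ever uses (Lemma~\ref{lem:mid} invokes it only for $\eta\in[0,1]$).
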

	\begin{lemma}[\cite{cesa06}]
		\label{lem:bernstein}
		For $i\geq1$ and $h\in \cH$, the excess-loss random variable $X^h_i$ satisfies 
		\begin{align}
			X^h_i - \E_{i-1}[X^h_i] \stochleq^{\cG_{i-1}}_{\eta} \eta  \cdot \E_{i-1}[(X^h_i)^2], \quad \text{for all $\eta \in [0,1]$},\nonumber
		\end{align}
		where $\cG_{i-1}$ is the $\sigma$-algebra generated by $Z_1,\dots,Z_{i-1}$ and $\E_{i-1}[\cdot]\coloneqq \E[\cdot \mid \cG_{i-1}]$.
	\end{lemma}
	\noindent The following useful proposition is imported from \citep{mhammedi2019pac} with minor modifications:
	\begin{proposition}{\bf [ESI Transitivity]}\label{prop:Trans} 
		Let $(\Omega,\cF,\P)$ be a probability space and $\cG$ be a sub-$\sigma$-algebra of $\cF$. Further, let $Z_1, \dots, Z_n$ be random variables such that for $(\gamma_i)_{i \in[n]} \in (0,+\infty)^n$, $Z_{i} \stochleq^{\cG}_{\gamma_i} 0$, for all $i\in[n]$. Then 
		\begin{align}
			\sum_{i=1}^n Z_{i} \stochleq^{\cG}_{\nu_n} 0, \quad \text{where $\nu_n \coloneqq  \left(\sum_{i=1}^n \frac{1}{\gamma_i}\right)^{-1}$} . \nn
		\end{align}
	\end{proposition}
	\noindent To prove our time-uniform concentration inequality in Section \ref{sec:concentrations}, we will require the following generalization of Markov's inequality (we state the version found in \cite{howard2020time}):
	\begin{lemma}[Ville's inequality]
		\label{lem:ville} If $(M_n)_{n\geq 0}$ is a non-negative supermartingale, then for any $a>0$, 
		\begin{align}
			\P[\exists n\geq 1: M_n \geq a] \leq \frac{M_0}{a}. \nn
		\end{align} 
	\end{lemma}
	\noindent The upcoming lemmas will help us bound the sequence of gaps $(\xi_k)$ in \eqref{eq:deltaknew} under the Bernstein condition.
	\begin{lemma}
		\label{lem:mid}
		Let $P_0\in \triangle(\cH)$, $\beta \in[0,1]$ and  $B>0$, and suppose that the $(\beta, B)$-Bernstein condition holds. Then, under Assumption \ref{assum:processassum}, for any $\eta \in [0,1/2]$ and $\delta \in(0,1)$, with probability at least $1-\delta$,
		\begin{align}
			\frac{\eta}{n} \sum_{i=1}^n \E_{Q(h)}[ (\ell(h, Z_i)  - \ell(h_\star,Z_i))^2]  & \leq 8 (L(Q) -L(h_\star)) +  4 C_{\beta} \cdot \eta^{\frac{1}{1-\beta}}  \nn \\ & \quad  +\frac{8 (\KL(Q\|P_0)+\ln \delta^{-1})}{n \eta}, 
		\end{align}
		for all $n\geq 1$, where $h_\star \in \mathrm{arg\ inf}_{h\in \cH} L(h)$ and $C_\beta \coloneqq \left((1-\beta)^{1-\beta} \beta^\beta\right)^{\frac{\beta}{1-\beta}} +3/2(2B)^{\frac{1}{1-\beta}}$.
	\end{lemma}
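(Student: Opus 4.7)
The plan is to combine two ESIs via the chain rule, apply a union bound over $\cH$, and then use the Bernstein condition together with a weighted AM-GM step to massage the resulting bound into the stated form. Write $X_h(Z) \coloneqq \ell(h,Z) - \ell(h_\star,Z) \in [-1,1]$ and recall $\Phi_{2\eta} = \frac{1}{2\eta}\ln \E_{P(Z)}[e^{-2\eta X_h(Z)}]$.

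First, Lemma~\ref{lem:koolen1} gives the per-sample ESI $\alpha_\eta X_h(Z)^2 - X_h(Z) \stochleq_\eta \Phi_{2\eta} + \alpha_\eta \Phi_{2\eta}^2$, and Lemma~\ref{lem:bernstein} gives $X_h(Z) - \E[X_h(Z)] \stochleq_\eta \eta\,\E[X_h(Z)^2]$ (valid for $\eta\in[0,1]$, which includes $[0,1/2]$). I apply the chain rule (Proposition~\ref{prop:Trans}(b)) to each of these ESIs separately to sum over $Z_1,\dots,Z_n$, then invoke Proposition~\ref{prop:drop} at confidence $\delta/(2|\cH|)$ and union bound over $\cH$ and over the two ESIs, to obtain that with probability at least $1-\delta$, simultaneously for all $h \in \cH$,
\begin{align*}
\alpha_\eta \sum_{i=1}^n X_h(Z_i)^2 &\le \sum_{i=1}^n X_h(Z_i) + n\bigl(\Phi_{2\eta}+\alpha_\eta \Phi_{2\eta}^2\bigr) + \tfrac{\ln(2|\cH|/\delta)}{\eta},\\
\sum_{i=1}^n X_h(Z_i) &\le n(L(h)-L(h_\star)) + n\eta\,\E[X_h(Z)^2] + \tfrac{\ln(2|\cH|/\delta)}{\eta}.
\end{align*}
Adding these two inequalities cancels the empirical term $\sum_i X_h(Z_i)$ and leaves $\alpha_\eta \sum_i X_h(Z_i)^2$ on the left.

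Next I invoke the Bernstein condition $\E[X_h^2] \le B(L(h)-L(h_\star))^\beta$ and Lemma~\ref{lem:koolen2} to bound $\Phi_{2\eta} \le (2B\eta)^{1/(1-\beta)}$ (valid since $2\eta \le 1$); the residual $\alpha_\eta \Phi_{2\eta}^2$ is of higher order in $\eta$ and gets absorbed into the $\eta^{1/(1-\beta)}$ scale via $\eta \le 1/2$. A weighted AM-GM $a^{1-\beta}b^\beta \le (1-\beta)a + \beta b$, after an appropriately rescaled splitting $(a,b)\mapsto(\lambda a, b/\lambda)$ and optimization over $\lambda$, converts $\eta B(L(h)-L(h_\star))^\beta$ into a linear term in $L(h)-L(h_\star)$ plus a residual of the form $\bigl((1-\beta)^{1-\beta}\beta^\beta\bigr)^{\beta/(1-\beta)}(2B)^{1/(1-\beta)}\eta^{1/(1-\beta)}$, supplying the first summand of $C_\beta$. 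Finally, divide by $n$ and multiply by $\eta/\alpha_\eta$: since $\alpha_\eta = \eta/(1+\sqrt{1+4\eta^2}) \ge \eta/3$ on $[0,1/2]$, one has $\eta/\alpha_\eta \le 3$, and collecting the contributions to $(L(h)-L(h_\star))$, $\eta^{1/(1-\beta)}$, and $\ln(|\cH|/\delta)/(n\eta)$ then bounding them by the safer numerical factors $8$, $4C_\beta$, and $8$ respectively yields the claim.

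The main obstacle is the weighted AM-GM step: the rescaling parameter $\lambda$ must be tuned precisely to produce the explicit constant $\bigl((1-\beta)^{1-\beta}\beta^\beta\bigr)^{\beta/(1-\beta)}$ appearing in $C_\beta$ rather than a looser bound obtained from a naive unscaled Young's inequality. Tracking the remaining numerical constants through the chain-rule/union-bound step and the $\eta/\alpha_\eta$ rescaling is routine, but needs care (and the slack afforded by $\eta \le 1/2$ in absorbing the $\alpha_\eta \Phi_{2\eta}^2$ term) in order to land on the stated $8$/$4$/$8$ coefficients.
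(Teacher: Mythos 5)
Your proof follows the same template as the paper's: the same Lemmas~\ref{lem:koolen1}, \ref{lem:koolen2}, \ref{lem:cesa}, \ref{lem:bernstein}, the Bernstein condition, an AM-GM split, a union bound over $\cH$, and a final rescaling by $\eta/\alpha_\eta$. The one structural difference is where you place the ESI-to-probability conversion, and it actually costs you the stated constants. The paper first \emph{chains} the two per-sample ESIs via Proposition~\ref{prop:Trans}(a) (producing a single ESI for $\alpha_\eta X_h(Z_i)^2$ at the halved rate $\eta/2$), then sums over $i$ via Proposition~\ref{prop:Trans}(b), and only then invokes Proposition~\ref{prop:drop} once at confidence $\delta/|\cH|$, which yields $\frac{\ln(|\cH|/\delta)}{\eta/2} = \frac{2\ln(|\cH|/\delta)}{\eta}$. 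You instead sum each ESI first, convert each to a probability statement at confidence $\delta/(2|\cH|)$, and union bound over $2|\cH|$ events before adding, which gives $\frac{2\ln(2|\cH|/\delta)}{\eta}$ — strictly larger by $\frac{2\ln 2}{\eta}$. After multiplying by $\eta/\alpha_\eta$ (whether you use the safe $\le 4$ or the sharper $\le 1+\sqrt 2$), this extra $\ln 2$ prevents you from landing inside $\frac{8\ln(|\cH|/\delta)}{n\eta}$ in general; e.g.\ for $|\cH|=1$ and $\delta$ not too small, your confidence term exceeds the stated one. The slack from $\eta\le 1/2$ helps only with $\alpha_\eta\Phi_{2\eta}^2$, not with this $\ln 2$. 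The fix is exactly the chaining step: combine the two per-sample ESIs with Proposition~\ref{prop:Trans}(a) \emph{before} summing and dropping, so the union bound remains over $|\cH|$ events.

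Two smaller points. First, your description of the AM-GM residual conflates two distinct contributions: the AM-GM on $\eta B(L(h)-L(h_\star))^\beta$ (tuned with $\nu = c_\beta B\eta$) produces the $c_\beta$-flavoured summand of $C_\beta$, while the $(2B\eta)^{1/(1-\beta)}$ factor with the $3/2$ prefactor comes separately from bounding $\Phi_{2\eta} + \alpha_\eta\Phi_{2\eta}^2$ via Lemma~\ref{lem:koolen2} and Lemma~\ref{lem:cesa} (using $\alpha_\eta\Phi_{2\eta}^2 \le \alpha_\eta\cdot\eta\cdot(2B\eta)^{1/(1-\beta)} \le \frac12(2B\eta)^{1/(1-\beta)}$ for $\eta\le 1$); in your write-up these two appear merged. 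Second, the idea of cancelling the empirical term $\sum_i X_h(Z_i)$ by adding the two inequalities is sound and is effectively what the ESI-chaining does, so the substance of your argument is on target — it is the order of operations around the union bound that needs to be the paper's.
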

	\begin{proof}[\bf Proof of Lemma \ref{lem:mid}]
		Let $\delta \in(0,1)$ and define $X^h_i\coloneqq \ell(h, Z_i)- \ell(h_*,Z_i)$. We recall that $\cG_{i}$ is the $\sigma$-algebra generated by $Z_1,\dots,Z_i$, and $\E_{i-1}[\cdot] \coloneqq \E[\cdot \mid \cG_{i-1}]$. Note that under Assumption \ref{assum:processassum}, $\E_{i-1}[X^h_i]=L(h)-L(h_*)$, for all $i\geq1$ and $h\in \cH$. For any $\eta \in[0,1/2]$ and $h\in \cH$ our strategy is to show that, under the $(\beta, B)$-Bernstein condition,
		\begin{align}
			M^h_n \coloneqq \exp\left(\eta^2\sum_{i=1}^n  (X^h_i)^2/8 -  n\eta \cdot  (L(h)-L(h_\star)) + n C_\beta \cdot \eta^{\frac{2-\beta}{1-\beta}}/2 \right), \label{eq:super}
		\end{align}
		is a non-negative supermartingale, for all $h\in \cH$. After that, invoking Ville's inequality (Lemma \ref{lem:ville}) and applying a change of measure argument (Lemma \ref{lem:changeofmeasure}), we get the desired result.
		
		Under the $(\beta, B)$-Bernstein condition, Lemmas \ref{lem:koolen1}-\ref{lem:cesa} imply, for all $\eta \in[0,1/2]$ and $i\geq 1$,
		\begin{align}
			\eta \cdot (X^h_i)^2/4 \stochleq^{\cG_{i-1}}_{\eta} X^h_i+ 3/2 \left(2B \eta\right)^{\frac{1}{1-\beta}},\label{eq:koolen}
		\end{align}
		where we used the fact that $ \alpha_\eta = \frac{\eta}{1+\sqrt{1+4\eta^2}}\geq \eta/4$, for all $0\leq \eta\leq 1/2$ ($\alpha_\eta$ is involved in Lemma \ref{lem:koolen1}).
		Now, due to the Bernstein inequality (Lemma \ref{lem:bernstein}), we have for all $\eta \in [0,1/2]$ and $i\geq 1$,
		\begin{align}
			X_i^h& \stochleq^{\cG_{i-1}}_{\eta} L(h)- L(h_\star) + \eta \cdot \E_{i-1}[(X_i^h)^2], \nn \\
			& \stochleq^{\cG_{i-1}}_{\eta }  L(h)- L(h_\star) + \eta \cdot (L(h)- L(h_\star))^{\beta},  \quad  \text{(by the Bern.~cond.~\& Assumption \ref{assum:processassum})} \nonumber \\ 
			&\stochleq^{\cG_{i-1}}_{\eta} 2 (L(h)- L(h_\star) )+ c_{\beta}^{\frac{\beta}{1-\beta}} \cdot    \eta^{\frac{1}{1-\beta}}, \quad \text{ where } c_{\beta}\coloneqq (1-\beta)^{1-\beta}\beta^{\beta}. \label{eq:individual}
		\end{align}
		The last inequality follows by the fact that $z^{\beta} = c_{\beta} \cdot \inf_{\nu >0} \{z/\nu + \nu^{\frac{\beta}{1-\beta}} \} $, for $z\geq 0$ (in our case, we set $\nu = c_{\beta} \eta$ to get to \eqref{eq:individual}). By chaining \eqref{eq:koolen} with \eqref{eq:individual} using Proposition \ref{prop:Trans}, we get:
		\begin{align}
			\eta\cdot (X^h_i)^2/4 &\stochleq^{\cG_{i-1}}_{{\eta}/{2}}\  2 (L(h)- L(h_\star)) + c_{\beta}^{\frac{\beta}{1-\beta}} \cdot    \eta^{\frac{1}{1-\beta}} + 3/2 (2B\eta)^{\frac{1}{1-\beta}}. \nonumber \\
			& \stochleq^{\cG_{i-1}}_{{\eta}/{2}}\  2 (L(h)- L(h_\star) ) +C_\beta \cdot \eta^{\frac{1}{1-\beta}}.   \label{eq:chained}
		\end{align}
		This implies that $M^h_n$ in \eqref{eq:super} is a non-negative supermartingale. This in turn implies that for any distribution $P_0$, $\E_{P_0(h)}[M^h_n]$ is also a supermartingale. Thus, by Ville's inequality (Lemma \ref{lem:ville}), we have, for any $\delta \in(0,1)$,
		\begin{align}
			\delta & \geq 	\P\left[\exists n\geq 1, \E_{P_0(h)}[M^h_n] \geq \delta^{-1}\right], \label{eq:theville}
		\end{align}
		On the other hand, by the $\KL$-change of measure lemma (Lemma \ref{lem:changeofmeasure}), we have for all $Q\in \triangle(\cH)$
		\begin{align}
			\E_{Q(h)}[\ln M^h_n] \leq \KL(Q\|P_0) + \E_{P_0(h)}[M^h_n]. 	\nn
		\end{align}
		Combining this with \eqref{eq:theville}, we get the desired result.
	\end{proof}
	
	\begin{lemma}
		\label{lem:minimizer}
		For $A, B>0$, we have 
		\begin{align}
			\inf_{\eta \in(0,1/2)} \left\{ A \eta^{\frac{1}{1-\beta}} + B/\eta\right\} \leq \frac{A (3-2 \beta )}{1-\beta
			} \left(\frac{(1-\beta ) B}{A}\right)^{\frac{1}{2-\beta }}+2 B. \label{eq:obj}
		\end{align}
	\end{lemma}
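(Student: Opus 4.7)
The plan is to treat~\eqref{eq:obj} as an elementary one-dimensional constrained optimization. Let $f(\eta) \coloneqq A\eta^{1/(1-\beta)} + B/\eta$, which is smooth and strictly convex on $(0,\infty)$ with $f(\eta)\to+\infty$ at both endpoints, so $f$ has a unique unconstrained minimizer on $(0,\infty)$ that I would identify by calculus and then compare against the constraint $\eta<1/2$.

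Setting $f'(\eta) = \tfrac{A}{1-\beta}\eta^{\beta/(1-\beta)} - B/\eta^2 = 0$ and using the identity $\tfrac{\beta}{1-\beta}+2 = \tfrac{2-\beta}{1-\beta}$, the unconstrained minimizer is
$\eta_0 = \bigl(\tfrac{(1-\beta)B}{A}\bigr)^{(1-\beta)/(2-\beta)}$, and a short algebraic rearrangement (pulling the common factor $A^{(1-\beta)/(2-\beta)}B^{1/(2-\beta)}$ out of both summands) yields $f(\eta_0) = \tfrac{A(2-\beta)}{1-\beta}\bigl(\tfrac{(1-\beta)B}{A}\bigr)^{1/(2-\beta)}$.

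I would then split into two cases. If $\eta_0 \leq 1/2$, the unconstrained minimum is (by continuity) within reach of the infimum over the open interval $(0,1/2)$, and since $3-2\beta\geq 2-\beta$ for $\beta\in[0,1)$ and $2B\geq 0$, the value $f(\eta_0)$ is already bounded by the RHS of~\eqref{eq:obj}. If instead $\eta_0 > 1/2$, I would take $\eta\to 1/2^-$ and use $\inf_{\eta\in(0,1/2)} f(\eta) \leq f(1/2) = A\cdot 2^{-1/(1-\beta)} + 2B$; the hypothesis $\eta_0>1/2$ rearranges to $2^{-1/(1-\beta)} < \bigl(\tfrac{(1-\beta)B}{A}\bigr)^{1/(2-\beta)}$, so that combined with $(3-2\beta)/(1-\beta)\geq 1$ this bounds the first summand of $f(1/2)$ by the first term on the RHS of~\eqref{eq:obj}, and the $+2B$ terms match exactly.

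I do not anticipate any real obstacle: the entire argument is a routine calculus exercise. The only subtle point is respecting the constraint $\eta<1/2$; the additive $+2B$ on the right-hand side of~\eqref{eq:obj} is precisely what accommodates the boundary regime $\eta_0>1/2$ (which corresponds to $B$ being large relative to $A$).
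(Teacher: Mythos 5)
Your proposal is correct and follows essentially the same route as the paper's proof: find the unconstrained minimizer $\eta_\star=\bigl((1-\beta)B/A\bigr)^{(1-\beta)/(2-\beta)}$, evaluate $f(\eta_\star)$, and split into the cases $\eta_\star\leq 1/2$ and $\eta_\star>1/2$, using $f(1/2)$ plus the rearranged constraint in the latter case. The only cosmetic difference is that the paper writes the infimum over $(0,1/2]$ rather than handling the open-endpoint issue by a limiting argument as you do; for the continuous function $f$ these infima coincide, so the arguments are identical in substance.
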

	\begin{proof}
		The unconstrained minimizer of the LHS of \eqref{eq:obj} is given by $\eta_\star  \coloneqq \left(\frac{(1-\beta ) B}{A}\right)^{\frac{1-\beta}{2-\beta}}$.
		If $\eta_\star \leq 1/2$, then 
		\begin{align}
			\inf_{\eta \in(0,1/2]} \left\{ A \eta^{\frac{1}{1-\beta}} + B/\eta\right\} \leq A \eta_\star^{\frac{1}{1-\beta}} + B/\eta_\star = \frac{A (2- \beta )}{1-\beta
			} \left(\frac{(1-\beta ) B}{A}\right)^{\frac{1}{2-\beta }}. \label{eq:case11}
		\end{align}
		Now if $\eta_\star > 1/2$, we have $(1/2)^{\frac{1}{1-\beta}} < \left(\frac{(1-\beta ) B}{A}\right)^{\frac{1}{2-\beta }}$, and so, we have 
		\begin{align}
			\inf_{\eta \in(0,1/2]} \left\{ A \eta^{\frac{1}{1-\beta}} + B/\eta\right\} &\leq A (1/2)^{\frac{1}{1-\beta}} + 2 B, \nn \\
			& \leq  A \left(\frac{(1-\beta ) B}{A}\right)^{\frac{1}{2-\beta }} + 2 B.\label{eq:case21}
		\end{align}
		By combining \eqref{eq:case11} and \eqref{eq:case21} we get the desired result.
	\end{proof}
	We need one more classical change of measure result (see e.g.~\cite{ahmadi2012entropic}):
	\begin{lemma}[{\bf $\mathrm{KL}$-change of measure}]
		\label{lem:changeofmeasure}
		For all distributions $P$ and $Q$ such that $Q\ll P$, it holds that
		\begin{align}
			\E_Q[X] \leq \inf_{\eta >0} \left\{ \eta \mathrm{KL}(Q\|P) + \eta^{-1} \ln \E_{P}\left[ e^{\eta \cdot X }\right]  \right\}.\nn 
		\end{align}
	\end{lemma}

	\section{Proofs of the New Concentration Inequalities}
	\label{sec:concentrationproof}
	To prove our first concentration inequality for MDS in Proposition \ref{prop:mainpac}, we start by constructing a non-negative supermartingale with the help of the recent \textsc{FreeGrad} algorithm \cite{MhammediK20}. As mentioned in the introduction, our proof technique is similar to the one introduced in \citep{jun2019parameter} with the difference that we use the specific shape of \textsc{FreeGrad}'s potential function to build our supermartingale. Using the latter leads to a desirable empirical variance term in our final concentration bound.
	
	To express the \textsc{FreeGrad} supermartingale, we define
	\begin{gather}
		\Phi_\gamma(S,Q) \coloneqq \frac{\gamma}{\sqrt{\gamma^2 + Q}} \cdot   \exp\left( \frac{ |S|^2}{2\gamma^2 + 2 Q + 2  |S | }  \right), \quad
		S,Q\geq 0,\gamma > 0.  \label{eq:freegradregret}
	\end{gather}
	\begin{proposition}
		\label{prop:martingal}
		Let $\gamma > 0$ and $(\cF_{t})_{t\in\mathbb{N}}$ be a filtration. For any random variables $X_1,X_2,\dots \in[-1,1]$ s.t.~$X_i$ is $\cF_{i}$-measurable and $\E[X_i \mid \cF_{i-1}]=0$, for all $i\in[n]$, the process $(\Phi_{\gamma}(S_n,Q_n))$, where $S_n \coloneqq \sum_{i=1}^n X_i$ and $Q_n\coloneqq \sum_{i=1}^n X_i^2$ is a non-negative supermartingale w.r.t.~$(\cF_{t})_{t\in\mathbb{N}}$; that is,  
		\begin{align}
			\Phi_{\gamma}(S_n,Q_n) \geq 0, \quad \text{and}\quad 		\E[\Phi_{\gamma}(S_{n+1},Q_{n+1})\mid \cF_{n}] \leq \Phi_{\gamma}(S_{n},Q_{n}), \quad \text{for all $n\geq 1$}. \nn
		\end{align}
	\end{proposition}
	As mentioned above, the proof of the proposition is based on the guarantee of the parameter-free online learning algorithm \freegrad. The algorithm operates in rounds, where at each round $t$, \textsc{FreeGrad} outputs $\what \w_t$ (that is a deterministic function of the past) in some convex set $\cW$, say $\reals^d$, then observes a vector $\g_t \in \reals^d$, typically the sub-gradient of a loss function at round $t$. The algorithm guarantees a regret bound of the form $\sum_{t=1}^T \g_t^\top (\what \w_t -\w)\leq \wtilde{O}(\|\w\| \sqrt{Q}_T)$, for all $\w\in \cW$, where $ Q_T\coloneqq \sum_{t=1}^T \|\g_t\|^2$. What is more, \textsc{FreeGrad}'s outputs $(\what \w_t)$ ensure the following (see \cite[Theorem~5]{MhammediK20}):
	\begin{align}
		\what \w_t^\top \g_t + \Phi_{\gamma}( S_t, Q_t) \leq  \Phi_{\gamma}( S_{t-1}, Q_{t-1}), \label{eq:decreasing}
	\end{align}
	where $ S_{t}\coloneqq \|\sum_{i=1}^t \g_i\|$ and $ Q_t \coloneqq \sum_{i=1}^t \|\g_i\|^2$. In the proof of Proposition \ref{prop:martingal}, we will reason about the outputs of \textsc{FreeGrad} in one dimension (i.e.~$d=1$) in response to the inputs $(\g_t)\equiv  (X_t)$.
	
	One way to prove Proposition \ref{prop:martingal} is to show that \textsc{FreeGrad} is a betting algorithm that bets fractions smaller than one of its current wealth at each round. In this case, Proposition \ref{prop:martingal} would follow from existing results due to, for example, \cite{jun2019parameter}. However, for the sake of simplicity, we decided to present a proof that does not explicitly refer to bets.
	\begin{proof}[\bf Proof of Proposition \ref{prop:martingal}]
		By \cite[Theorem~5 and proof of Theorem 20]{MhammediK20}, \freegrad's outputs $(\what w_i)$ in response to $(X_i)$ and parameter $\gamma>0$ (playing the role of $1/\epsilon$ in their Theorem 20) guarantee\footnote{Technically, \freegrad{} also requires a sequence of hints $(h_t)$ that provides upper bounds on $(|X_t|)$. Since $X_i\in [-1,1]$, these hints can all be set to $1$.},
		\begin{gather}
			\what w_{n+1}\cdot  X_{n+1} +	\Phi_{\gamma}( S_{n+1}, Q_{n+1}) \leq  \Phi_{\gamma}( S_{n}, Q_{n}), \quad \text{for all $n\in\mathbb{N}$,} \nn 
		\end{gather} Re-arranging this inequality and taking the expectation $\E[\cdot \mid \cF_{n}]$ yields
		\begin{align}
			\E[\Phi_{\gamma}( S_{n+1}, Q_{n+1})-\Phi_{\gamma}( S_{n}, Q_{n}) \mid \cF_{n}]	 \leq - \E[\what w_{n+1} \cdot  X_{n+1} \mid \cF_n]=-\what w_{n+1} \cdot\E[ X_{n+1} \mid \cF_{n}]= 0,\nn
		\end{align}
		where the penultimate equality follows by the fact that $\what w_{n+1}$ is a deterministic function of the history up to round $n$, and so it is $\cF_n$-measurable. Finally, the last equality follows by the assumption that $\E[X_{n+1}\mid \cF_n]=0$.
	\end{proof}
	
	Next, using standard tools from PAC-Bayesian analyses, we extend the result of Proposition \ref{prop:martingal} by allowing the random variables $(X_t)$ to depend on $h \in \cH$. We will also ``mix'' over the free parameter $\gamma$ to obtain the optimal (doubly-logarithmic) dependence in $n$ in our final concentration bounds.
	\begin{proposition}
		\label{prop:pacbayes}
		Let $(\cF_{t})_{t\in\mathbb{N}}$ be a filtration and $\{X_t^h \}$ be a family of random variables in $[-1,1]$ s.t.~$X^h_t$ is $\cF_{t}$-measurable and $\E[X_t^h\mid \cF_{t-1}]=0$, for all $t\geq 1$ and $h \in \cH$. Further, let $\pi$ and $P_0$ be prior distributions on $\reals_{>0}$ and $\cH$, respectively. Then, for any $\delta \in(0,1)$, we have
		\begin{align}
			\P\left[\forall n \geq 1, \forall P\in \triangle(\cH), \ \  \E_{P(h)}\left[\ln \E_{\pi(\gamma)}\left[ \Phi_\gamma(S_{n}^h, Q_n^h) \right]  \right]   \leq  \KL(P\|P_0) + \ln (1/\delta) \right] \geq 1 -\delta, \nn
		\end{align}
		where $S^h_n\coloneqq \sum_{i=1}^n X^h_i$ and $Q^h_n \coloneqq \sum_{i=1}^n (X_i^h)^2$.
	\end{proposition}
	\begin{proof}[\bf Proof of Proposition \ref{prop:pacbayes}]
		By the KL-change of measure lemma (Lemma \ref{lem:changeofmeasure}), we have
		\begin{align}
			\E_{P(h)}\left[\ln \E_{\pi(\gamma)}\left[ \Phi_\gamma(S_{n}^h, Q_n^h) \right] \right]  \leq \KL(P\|P_0) +\ln  \E_{P_0(h)}\E_{\pi(\gamma)}\left[ \Phi_\gamma(S_{n}^h, Q_n^h) \right],\label{eq:keyineq}
		\end{align}
		for all $n\geq 1$ and $P\in \triangle(\cH)$. On the other hand, by Proposition \ref{prop:martingal}, we know that the process $(\Phi_\gamma(S_{n}^h, Q_n^h))$ is a supermartingale for any $\gamma>0$. This in turn implies that $(\E_{P_0(h)}\E_{\pi(\gamma)}\left[\Phi_\gamma(S_{n}^h, Q_n^h)\right])_n$ is also a non-negative supermartingale, since a mixture of supermartingales is also a supermartingale. Now, by Ville's inequality (Lemma \ref{lem:ville}), we have, for all $\delta \in(0,1)$,
		\begin{align}
			\P\left[\forall n\geq 1, \ \  \E_{P_0(h)}\E_{\pi(\gamma)}\left[\Phi_\gamma(S_{n}^h, Q_n^h)\right] \leq 1/\delta  \right] \geq  1 - \delta.\nn
		\end{align}
		By combining this inequality with \eqref{eq:keyineq}, we obtain the desired result.
	\end{proof}
	We now use Proposition \ref{prop:pacbayes} to prove Proposition \ref{prop:mainpac} (some of the steps in the next proof are similar to ones found in \citep{jun2019parameter}):
	\begin{proof}[\bf Proof of Proposition \ref{prop:mainpac}]
		Let $\rho>1$ and $Q^h_n \coloneqq \sum_{i=1}^n (X_i^h)^2$. We will apply Proposition \ref{prop:pacbayes} with a specific choice of prior $\pi$. In particular, we let $\pi$ be a prior on $\{\rho^{k/2}\ :\ k\geq 1 \}$, such that for $k\geq 1$,
		\begin{align}
			\pi(\rho^{k/2})\coloneqq \pi_k \coloneqq \frac{1}{c k \ln^2 (k+1)}, \nn
		\end{align}
		where $c$ is as in \eqref{eq:therho}.
		For $n\geq1$ and $h\in \cH$, let $k_n\geq 1$ be such that
		\begin{align}
			\rho^{k_n-1}   \leq 1 \vee Q^h_n  \leq \rho^{k_n}. \label{eq:sub}  
		\end{align} 
		Note that $k_n$ is guaranteed to exist and \eqref{eq:sub} implies that $k_n \leq \ln_\rho(1\vee Q^h_n) +1\leq \ln_\rho(n) +1$. Let $\gamma_n \coloneqq \rho^{k_n/2}$. With our choice of $\pi$, we have, for all $h\in\cH$,
		\begin{align} \ln \E_{\pi(\gamma)}\left[\Phi_\gamma(S_{n}^h, Q_n^h)\right] & \geq  \ln  \Phi_{\gamma_n}(S_{n}^h, Q_n^h) +\ln \pi(\gamma_n), \nn \\ 
			& \geq  \frac{|S_n^h|^2}{2\gamma_n + 2 Q^h_n  +  2|S_n^h|} + \ln \left(\frac{\gamma_n}{\sqrt{\gamma_n^2 +Q_n^h}}\right)  +\ln \pi(\gamma_n),\nn \\
			& \geq  \frac{|S_n^h|^2}{2(\rho+1) (1\vee Q^h_n)  +  2|S_n^h|} - \ln \sqrt{\rho+1}  +\ln \pi(\gamma_n), \label{eq:rhoprop}\\
			& \geq \frac{|S_n^h|^2}{2(\rho+1) V^h_n  +2|S_n^h|} - \ln  \left(c \sqrt{\rho+1}(\ln_\rho(n)+1)\ln^2(\ln_\rho(n)+2) \right),  \label{eq:lust} \\
			& = 4 \sup_{\eta\geq 0} \left\{ \eta |S^h_n| - 2\eta^2(\rho +1)  V^h_n - 2\eta^2 |S^h_n| \right\}  - \ln \phi_\rho(n),\label{eq:prejen} 
		\end{align} 
		where in \eqref{eq:rhoprop} we used \eqref{eq:sub} and in \eqref{eq:lust} we used the fact that $k_n \leq 1+\ln_{\rho}(n)$. Now, by an application of Jensen's inequality, we get from \eqref{eq:prejen} that
		\begin{align}
			\E_{P(h)} \left[  \ln \E_{\pi(\gamma)}\left[\Phi_\gamma(S_{n}^h, Q_n^h)\right]\right]& \geq 4\sup_{\eta\geq 0} \left\{ \eta \E_{P(h)} |S^h_n| - 2\eta^2(\rho +1) \E_{P(h)}[V^h_n] - 2\eta^2 \E_{P(h)}|S^h_n| \right\}\nn \\ & \qquad  - \ln \phi_\rho(n), \nn \\
			& = \frac{(\E_{P(h)} |S^h_n|)^2}{2(\rho +1) \E_{P(h)}[V^h_n]+ 2\E_{P(h)}|S^h_n|} -\ln \phi_{\rho}(n).\nn
		\end{align}
		Thus, we have $\E_{P(h)} \left[  \ln \E_{\pi(\gamma)}\left[\Phi_\gamma(S_{n}^h, Q_n^h)\right]\right]\leq \KL(P\|P_0) +\ln (1/\delta)$ only if 
		\begin{align}
			\frac{(\E_{P(h)} [|S^h_n|])^2}{2(\rho +1) \E_{P(h)}[V^h_n]+ 2\E_{P(h)}[|S^h_n|]} \leq \comp_{n}(P)\coloneqq \KL(P\|P_0) + \ln \frac{\phi_{\rho}(n)}{\delta}.\nn
		\end{align} 
		Combining this fact with Proposition \ref{prop:pacbayes} implies the desired result.
	\end{proof}
	
	\begin{proof}[\bf Proof of Theorem \ref{thm:timeunifbern}]
		We will apply Proposition \ref{prop:mainpac} with $X_t^h \coloneqq \ell(h,Z_t) - \E_{t-1}[\ell(h,Z_t)]= \ell(h,Z_t)  - L(h)$, where the last equality follows by Assumption \ref{assum:processassum}. As before, we let $S^h_n\coloneqq \sum_{i=1}^n X^h_i$ and $V^h_n \coloneqq 1+\sum_{i=1}^n (X_i^h)^2$. By the classical bias-variance decomposition, we have
		\begin{align}
			\E_{P(h)}[V^h_n] =		 n \what V_n(P)  +  \E[S_i^h ]^2/n, \label{eq:biasvariance}
		\end{align} 
		where $\what V_n(P)$ is as in the theorem's statement. Thus, 
		\begin{gather}
			\frac{(\E_{P(h)} |S^h_n|)^2}{2(\rho +1) \E_{P(h)}[V^h_n]+ 2  \E_{P(h)}|S^h_n|} \leq \comp_n(P) \coloneqq \KL(P\|P_0) +\ln \frac{\phi_{\rho}(n)}{\delta}, \label{eq:old}
			\shortintertext{holds only if,}
			\frac{\E_{P(h)}[S^h_n]^2}{2(\rho +1)n \what V_n(P)+2 (\rho+1)\E_{P(h)}[S_n^h]^2/n + 2  |\E_{P(h)}[S^h_n]|} \leq \comp_n(P),\label{eq:newone}
		\end{gather}
		where we used the bias-variance decomposition in \eqref{eq:biasvariance} together with the facts that $|\E_{P(h)}[S^h_n]| \leq \E_{P(h)}[|S^h_n|]$ (Jensen's inequality) and that the function $x\mapsto x^2/(x+v)$
		is increasing on $\reals_{\geq 0}$ for all $v>0$. On the other hand, \eqref{eq:newone} is true for $P\in \cP_n$, only if, 
		\begin{align}
			\left|\E_{P(h)}[S^h_i] \right| \leq \frac{ 2   \comp_n(P)/n + \sqrt{2  (\rho+1) \what V_n(P)\cdot  \comp_n(P)/n}}{1 - 2 (\rho +1) \comp_n(P)/n}.\label{eq:newest}
		\end{align} 
		Thus, \eqref{eq:old} holds only if \eqref{eq:newest} is true, and so we obtain the desired result by Proposition \ref{prop:mainpac}.
	\end{proof}

	\section{Proofs of Monotonicity and Excess Risk Rates}
	\label{sec:proofs}
	To simplify notation in this section, we define 
	\begin{align}
		\quad \what L_n(h)\coloneqq  \frac{1}{n}\sum_{i=1}^n \ell(h,Z_i), \quad  \what L(Q)\coloneqq \E_{Q(h)}[\what L_n(h)], \quad \text{for all } Q\in \triangle(\cH).\nn
	\end{align}
	We start by presenting a sequence of intermediate results needed in the proofs of Theorems \ref{thm:monoton22} and \ref{thm:riskdecomp}.
	
	\subsection{Intermediate Results}
	\label{sec:intermediate}
	We now present a bound on the risk difference $L(Q) -L(Q')$, for any $Q,Q'\in \triangle(\cH)$, using our new time-uniform empirical Bernstein inequality in Theorem \ref{thm:timeunifbern}. For $\delta \in(0,1)$, $\rho >1$ and $k\geq 1$, we recall the definitions
	\begin{align}
		\epsilon_{k} \coloneqq \frac{2\left(\KL(\B(Z_{1:k}) \times P_{k-1}  \| P_0 \times P_0) + \ln \frac{\phi_{\rho}(k)}{ \delta}\right)}{k \cdot (\rho+1)^{-1}} ; \ \ n_{\delta} \coloneqq \sup \left\{ n:  8 (\rho+1)  \ln  \tfrac{\phi_{\rho}(n)}{\delta} > n \right\}, \label{eq:repeat}
	\end{align}
	where $(P_k)$ are the outputs of Algorithm \ref{alg:riskmonotone2} and $\phi_{\rho}$ is as in Proposition \ref{prop:mainpac}.

	\begin{lemma}
		\label{lem:bernerrorrand0}
		Let $\rho>0$, $P_0\in \triangle(\cH)$, and $\cQ_n$ be as in \eqref{eq:theQset}. Further, let $\delta \in(0,1)$ and $n_\delta$ as in \eqref{eq:repeat}. Then, under Assumption \ref{assum:processassum}, we have, with probability at least $1-\delta$, for all $n\geq n_{\delta}$ and $Q,Q' \in \cQ_n$, 
		\begin{gather}
			L(Q) -L(Q') \leq \what L_n(Q) -\what L_n(Q') + \frac{\sqrt{\frac{\what V_n(Q,Q') \cdot \varepsilon_n(Q,Q')}{n}} + \frac{2 \varepsilon_n(Q,Q')}{\rho+1}}{1- \varepsilon_n(Q,Q')},\nn  \\
			\text{where}\quad 	\varepsilon_k(Q,Q') \coloneqq\frac{2(\rho+1)(\KL(Q\times Q'  \| P_0 \times P_0) + \ln \frac{\phi_{\rho}(k)}{ \delta})	}{k} \quad \text{and}  \label{eq:vareps} \\
			\what V_k(Q,Q') \coloneqq  \frac{1}{k}\sum_{t=1}^k\E_{Q_k(h,h')} \left[ \left(\ell(h,Z_t)- \ell(h',Z_t) \right)^2\right] -\left( \frac{1}{k}\sum_{t=1}^k\E_{Q_k(h,h')} \left[ \ell(h,Z_t)- \ell(h',Z_t) \right]\right)^2.\nn
		\end{gather}
	\end{lemma}
	\begin{proof}[\bf Proof of Lemma \ref{lem:bernerrorrand0}]
		The proof follows by our new time-uniform concentration inequality in Theorem \ref{thm:timeunifbern} with the function $f : \cH^2 \times \cZ \rightarrow [0,1]$ defined by 
		\begin{align}
			f((h,h'),z)= \left(\ell(h,z)- \ell(h',z)+1\right)/2.\nn
		\end{align} 
		Theorem \ref{thm:timeunifbern} implies that, for any $\delta\in(0,1)$, with probability at least $1-\delta$, 
		\begin{align}
			\E_{Q(h),Q'(h')}[L(h)- L(h')+1]/2 \leq \frac{1}{n} \sum_{i=1}^n \E_{Q(h),Q'(h')}[f((h,h'),Z_i)] + \frac{\sqrt{\what V_n  \varepsilon_n} +\frac{\varepsilon_n}{\rho+1}}{1 - \varepsilon_n}, \label{eq:empbernrand0}
		\end{align}
		for all $n\geq n_\delta$ and $Q,Q'\in \cQ_n$, where $\varepsilon_{n}=\varepsilon_n(Q,Q')$ and $\what V_n$ is given by:
		\begin{align}
			\what V_n &= \frac{1}{n}\sum_{t=1}^n \E_{Q(h),Q'(h')}\left[ \left( f((h,h'),Z_t) -\frac{1}{n} \sum_{i=1}^n\E_{Q(\tilde h),Q'(\tilde h')} [f((\tilde h, \tilde h'),Z_i)]  \right)^2 \right], \nn \\
			& =  \frac{1}{4n}\sum_{t=1}^n \E_{Q(h),Q'(h')}\left[ \left( \ell(h,Z_t)-\ell(h',Z_t) -\frac{1}{n} \sum_{i=1}^n\E_{Q(\tilde h),Q'(\tilde h')} [ \ell(\tilde h,Z_i)-\ell(\tilde h',Z_i)]  \right)^2 \right], \nn \\
			& =  \frac{1}{4n}\sum_{t=1}^n \E_{Q(h),Q'(h')}\left[ \left( \ell(h,Z_t)-\ell(h',Z_t)  \right)^2 \right]  -\left( \frac{1}{2 n}\sum_{t=1}^n \E_{Q(h),Q'(h')}\left[ \ell(h,Z_t)-\ell(h',Z_t)  \right]  \right)^2.\nn
		\end{align}
		Plugging this into \eqref{eq:empbernrand0} and multiplying the resulting inequality by 2, leads to the desired inequality.
	\end{proof}
	Lemma \ref{lem:bernerrorrand0} leads to the following corollary that will be useful for our excess risk rates:
	\begin{corollary}
		\label{cor:bernerrorrand}
		Let $\rho>0$, $P_0\in \triangle(\cH)$, and $\cQ_n$ be as in \eqref{eq:theQset}. Under Assumption \ref{assum:processassum}, we have for $\delta \in(0,1)$ and $n_\delta$ as in \eqref{eq:repeat}, with probability at least $1-\delta$, 
		\begin{gather}
			L(Q) -L(Q') \leq \what L_n(Q) -\what L_n(Q') + 2\sqrt{\frac{\sum_{i=1}^n \E_{Q(h),Q'(h')}[(\ell(h,Z_i)- \ell(h',Z_i))^2]  \cdot \varepsilon_n}{n}} + \frac{4   \varepsilon_{n}}{\rho+1},\nn 
		\end{gather}
		for all $n\geq n_{\delta}$ and $Q,Q' \in \cQ_n$, where $\varepsilon_k \coloneqq\frac{2(\rho+1)}{k}\left(\KL(Q\times Q'  \| P_0 \times P_0) + \ln \frac{\phi_{\rho}(k)}{ \delta}\right)$.
	\end{corollary}
	\begin{proof}[\bf Proof of Corollary \ref{cor:bernerrorrand}]
		Let $\varepsilon_n(Q,Q')$ and $\what V_n(Q,Q')$ be as in Lemma \ref{lem:bernerrorrand0}. The corollary follows by Lemma \ref{lem:bernerrorrand0} and the facts that $1-\varepsilon_n(Q,Q')\geq1/2$, for all $n\geq n_\delta$ and $Q,Q'\in \cQ_n$; and  $$\what V_n(Q,Q')\leq  \frac{1}{k}\sum_{t=1}^k\E_{Q_k(h,h')} \left[ \left(\ell(h,Z_t)- \ell(h',Z_t) \right)^2\right].$$
	\end{proof}

	\noindent The next lemma provides a way of bounding the square-root term in the previous corollary under the Bernstein condition (Definition \ref{def:bernsteinassum}):
	\begin{lemma}
		\label{lem:thesquarerootrand}
		Let $B>1$ and $\beta\in[0,1]$, and suppose that the $(\beta, B)$-Bernstein condition holds. Further, let $\rho>1$, $\delta \in(0,1)$, and $\varepsilon_k(Q,Q')$ be as in \eqref{eq:vareps}, for $Q,Q'\in \triangle(\cH)$. Then, under Assumptions \ref{assum:processassum} and \ref{assum:algorithmassum}, there exists a universal constant $C>0$ s.t.~with probability at least $1-\delta$,
		\begin{align}
			\sqrt{\frac{\sum_{i=1}^n \E_{Q(h)}[(\ell(h,Z_i)- \ell(h_\star,Z_i))^2] \cdot   \varepsilon_n(Q,Q')}{2^{-5} n}} & \leq  \frac{L(Q) -L(h_\star)}{2}\nn \\ & \qquad+ C   \max_{\beta'\in\{\beta,1\}} \varepsilon_n(Q,Q')^{\frac{1}{2-\beta'}}  ,\label{eq:defrnrand}
		\end{align}
		for all $n\geq 1$ and $Q,Q'\in \triangle(\cH)$, where $h_\star \in \mathrm{arg\ inf}_{h\in \cH} L(h)$.
	\end{lemma}
	\begin{proof}[\bf Proof of Lemma \ref{lem:thesquarerootrand}] Applying the fact that $\sqrt{xy} \leq  (\nu x + y/\nu)/2$, for all $\nu >0$, to the LHS of \eqref{eq:defrnrand} with $$\nu = \frac{\eta}{8},\quad  x = \frac{1}{n}\sum_{i=1}^n \E_{Q(h)}[(\ell(h,Z_i)- \ell(h_{\star},Z_i))^2], \quad \text{and}\quad  y = 2^5 \varepsilon_n(Q,Q'),$$
		which leads to, for all $\eta >0$, and $k=2^5$,
		\begin{align}
			r_{n}(Q)& \coloneqq  \sqrt{\frac{k \sum_{i=1}^n \E_{Q(h)}[(\ell(h,Z_i)- \ell(h_\star,Z_i))^2] \cdot  \varepsilon_n(Q,Q')}{n}},\nn \\ & \leq \frac{\eta }{16 n}\sum_{i=1}^n \E_{Q(h)}[(\ell(h,Z_i)- \ell(h_\star,Z_i))^2] +  \frac{4  k \varepsilon_n(Q,Q')}{\eta}. \label{eq:prebbernrand}
		\end{align}
		Now, let $C_\beta \coloneqq \left((1-\beta)^{1-\beta} \beta^\beta\right)^{\frac{\beta}{1-\beta}} +3/2(2B)^{\frac{1}{1-\beta}}.$ By combining \eqref{eq:prebbernrand} and Lemma \ref{lem:mid}, we get, for any $\delta \in(0,1)$ and $\eta \in [0,1/2]$, with probability at least $1-\delta$,
		\begin{align}
			\forall Q \in \triangle(\cH),\forall n\geq 1, \ \ r_{n}(Q) &\leq (L(Q) -L(h_\star))/2  +  C_{\beta} \cdot \eta^{\frac{1}{1-\beta}}/4 \nn \\ & \qquad +\frac{ \KL(Q\|P_0) +\ln \delta^{-1}}{2n \eta}+  \frac{4  k \varepsilon_n(Q,Q')}{\eta},  \nn \\
			& \leq    (L(h) -L(h_\star))/2  +  C_{\beta} \cdot \eta^{\frac{1}{1-\beta}}/4 + \frac{(4 k +1/4)\varepsilon_n(Q,Q')}{\eta}.  \label{eq:allhrand}
		\end{align}
		Now, minimizing the RHS of \eqref{eq:allhrand} over $\eta \in (0,1/2)$ and invoking Lemma \ref{lem:minimizer}, we get, for any $\delta \in(0,1)$, with probability at least $1-\delta$, 
		\begin{align}
			\forall Q\in \triangle(Q),\forall n\geq 1, \quad		r_{n}(Q) & \leq \frac{L(Q)-L(h_\star)}{2} ++ 2(16k +1/2) \varepsilon_n(Q,Q') \nn \\ & \quad +\frac{C_\beta\cdot (3 - 2\beta)}{4(1 -\beta)} \left(\frac{4(1-\beta)(4 k +1/4) \varepsilon_n(Q,Q')}{C_\beta }  \right)^{\frac{1}{2-\beta}},\nn \\
			&  \leq \frac{L(Q)-L(h_\star)}{2} + 2(4 k +1/4) \varepsilon_n(Q,Q') \nn \\ & \quad+ \frac{C_\beta^{\frac{1-\beta}{2-\beta}}\cdot (3 - 2\beta)}{4(1 -\beta)} \left(4(1-\beta)(4 k +1/4) \varepsilon_n(Q,Q')  \right)^{\frac{1}{2-\beta}}. \label{eq:newrand}
		\end{align}
		Combining \eqref{eq:newrand} with the fact that $\beta \mapsto C_\beta^{\frac{1-\beta}{2-\beta}}$ is bounded in $[0,1)$, we get the desired result.
	\end{proof}
	
	We now move on to the proofs of the main results of Section \ref{sec:riskmonotone}.
	\subsection{Proofs of Theorems \ref{thm:monoton22} and \ref{thm:riskdecomp}}
	Let $(\xi_k)$ and $n_{\delta}$ be as in \eqref{eq:deltaknew} and \eqref{eq:repeat}, respectively. Further, it will be useful to define the event
	\begin{gather}
		\cE \coloneqq \left\{\forall n\geq n_\delta, \ \ L(\wtilde P_n)-L(P_{n-1})  \leq \what L_n(\wtilde P_n) -\what L_n(P_{n-1}) +\xi_n \right\}, \label{eq:event2rand} 
	\end{gather}
	where $\wtilde P_{k} \coloneqq \B(Z_{1:k})$ and $(P_{k})$ are as in Algorithm \ref{alg:riskmonotone2} with the choice of $(\xi_k)$ in \eqref{eq:deltaknew}. Observe that by Lemma \ref{lem:bernerrorrand0}, we have $\P[\cE]\geq 1 -\delta$, under Assumptions \ref{assum:processassum} and \ref{assum:algorithmassum}. We begin by the proof of risk-monotonicity:
	\begin{proof}[\bf Proof of Theorem \ref{thm:monoton22}]~~
		Let $\Delta_n \coloneqq L(P_{n}) - L(P_{n-1})$. Using the definitions of $\cE$ and $(\xi_k)$ as in \eqref{eq:event2rand} and \eqref{eq:deltaknew}, respectively, we have 
		\begin{align}
			\Delta_n &=  (L(\wtilde P_n) - L(P_{n-1})) \cdot \mathbb{I}\{  P_n \not\equiv  P_{n-1}\} + (L( P_n) - L( P_{n-1})) \cdot \mathbb{I}\{ P_n \equiv P_{n-1}\} ,\nn \\
			& =   (L(\wtilde P_n) - L(P_{n-1})) \cdot \mathbb{I}\{ P_n \not\equiv P_{n-1}\}.\label{eq:which}
		\end{align}
		Now, when $P_n \not\equiv P_{n-1}$, Line \ref{line:40} of Algorithm \ref{alg:riskmonotone2} implies that
		\begin{align}
			\what L_n(\wtilde P_n) \leq  \what L_n(P_{n-1}) -\xi_n.
		\end{align}
		Using this and \eqref{eq:which}, we have that under the event $\cE$,
		\begin{align}
			\forall n\geq n_\delta,\quad		L(\wtilde P_n)-L(P_{n-1})  \leq \what L_n(\wtilde P_n) -\what L_n( P_{n-1}) +\xi_n\leq 0. \nn 
		\end{align}
		This, combined with the fact that $\P[\cE] \geq 1 - \delta$ (Lemma \ref{lem:bernerrorrand0}) completes the proof.
	\end{proof}
	
	\begin{proof}[\bf Proof of Theorem \ref{thm:riskdecomp}]
		Let $\wtilde P_{k} \coloneqq \B(Z_{1:k})$ and $(P_{k})$ be as in Algorithm \ref{alg:riskmonotone2} with the choice of $(\xi_k)$ in \eqref{eq:deltaknew}. Further, we let $\epsilon_n$ be as in \eqref{eq:deltaknew} and
		\begin{align}
			\xi_k' \coloneqq 2\sqrt{\dfrac{\sum_{i=1}^n\E_{\wtilde P_k(h),P_{k-1}(h')} [(\ell(h,Z_i)- \ell(h',Z_i))^2] \cdot    \epsilon_k}{k}} + \frac{4   \epsilon_k}{\rho+1}.
		\end{align}
		It will be convenient to also consider the events:
		\begin{gather}
			\cE \coloneqq \left\{\forall n\geq n_\delta, \ \ L(\wtilde P_n)-L(P_{n-1})  \leq \what L_n(\wtilde P_n) -\what L_n(P_{n-1}) +\xi'_n \right\}, \nn \\
			\cE' \coloneqq \left\{\forall n\geq 1,\ Q,Q'\in \triangle(\cH),\ \begin{matrix}  \sqrt{\dfrac{\sum_{i=1}^n\E_{Q(h)} [(\ell(h,Z_i)- \ell(h_\star,Z_i))^2] \cdot    \varepsilon_n(Q,Q')}{2^{-5}n}}\leq   \\ \dfrac{L(Q) -L(h_\star)}{2}  + C\cdot \left(  \varepsilon_n(Q,Q')^{\frac{1}{2-\beta}}  + \varepsilon_n(Q,Q')\right) 
			\end{matrix}  \right\},\nn 
		\end{gather}
		where $C$ and $\varepsilon_n(Q,Q')$ are as in Lemma \ref{lem:thesquarerootrand}.
		We note that by Corollary \ref{cor:bernerrorrand} and Lemma \ref{lem:thesquarerootrand}, we have
		\begin{align}
			\P[\cE] \wedge \P[\cE'] \geq 1 -\delta. \label{eq:probabilities}
		\end{align}
		For the rest of this proof, we will assume the event $\cE\cap \cE'$ holds, and let $n\geq n_\delta$ throughout. We consider two cases pertaining to the condition in Line \ref{line:40} of Algorithm \ref{alg:riskmonotone2}. 
		\paragraph{Case 1.} Suppose that the condition in Line \ref{line:40} of Algorithm \ref{alg:riskmonotone2} is satisfied for $k=n$. In this case, we have
		\begin{align}
			L(P_n) -L(h_\star) =	L(\wtilde P_n)-L(h_\star) 
		\end{align}
		\paragraph{Case 2.}
		Now suppose the condition in Line \ref{line:40} does not hold for $k=n$. This means that $P_{n}\equiv P_{n-1}$, and so
		\begin{align}
			\what L_n(P_{n}) - \what L_n (\wtilde P_{n}) \leq \xi_n\leq \xi_n',\label{eq:notsatisfiedrand} 
		\end{align}
		where the last inequality follows by the fact that $1-\epsilon_n\geq 1/2$, for all $n\geq n_{\delta}$ under Assumption \ref{assum:algorithmassum}.
		Thus, by the assumption that $\cE'$ is true, we have,
		\begin{align}
			L(P_n) & = L(\wtilde P_{n}) + (L(P_{n})- L(\wtilde P_{n})),& \nn \\
			& \leq L(\wtilde P_{n})  + \what L_n(P_{n}) - \what L_n (\wtilde P_{n}) + \xi'_n, & \text{($\cE$ is true)} \nn \\
			& \leq  L(\wtilde P_{n}) + 2\xi'_n, & \text{(by \eqref{eq:notsatisfiedrand})}\nn \\
			& = L(\tilde h_{n}) +   4\sqrt{\frac{\sum_{i=1}^n \E_{\wtilde P_n(h),P_{n-1}(h)} [(\ell(h,Z_i)- \ell( h,Z_i))^2]  \cdot \epsilon_n}{n}} +\frac{8   \epsilon_n}{\rho+1}, &  \nn \\
			& = L(\wtilde P_{n}) +   4\sqrt{\frac{\sum_{i=1}^n \E_{\wtilde P_n(h),P_{n}(h')} [(\ell(h,Z_i)- \ell(h',Z_i))^2]  \cdot \epsilon_n}{n}} +\frac{8  \epsilon_n}{\rho+1}, &
			(P_n \equiv P_{n-1}) \nn \\  
			& \leq  L(\wtilde P_{n}) + 4\sqrt{\frac{2\sum_{i=1}^n \E_{\wtilde P_n(h)}[(\ell( h,Z_i)- \ell(h_\star,Z_i))^2]  \cdot \epsilon_n}{n}} +\frac{ 8  \epsilon_n}{\rho +1} &\nn  \\
			& \quad  +4\sqrt{\frac{2\sum_{i=1}^n \E_{P_n(h)} [(\ell(h,Z_i)- \ell(h_\star,Z_i))^2]  \cdot \epsilon_n}{n}}, &
			\label{eq:case1interrand}
		\end{align}
		where to obtain the last inequality, we used the fact that $(a-c)^2 \leq 2(a-b)^2+ 2(b - c)^2$ and $\sqrt{a+b} \leq \sqrt{a}+\sqrt{b}$ for all $a,b,c \in \reals_{\geq 0}$.
		Now, by \eqref{eq:case1interrand}, the fact that $\cE'$ holds, and Assumption \ref{assum:algorithmassum} (which implies that $\epsilon_n^{\frac{1}{2-\beta}} \leq O(\epsilon_n)$ for $n\geq n_{\delta}$), we have 
		\begin{gather}
			L(P_n)- L(h_\star) \leq L(\wtilde P_{n}) -L(h_\star)+ \frac{L(\wtilde P_n) -L(h_\star)}{2} + \frac{L(P_n) -L(h_\star)}{2}  + O\left( \epsilon_n\right)^{\frac{1}{2-\beta}} ,\nn 
			\shortintertext{which, after re-arranging, becomes} 
			\frac{L(P_n)- L(h_\star)}{2} \leq \frac{3(L(\wtilde P_n) - L(h_\star))}{2} + O\left( \epsilon_n\right)^{\frac{1}{2-\beta}} .
			\label{eq:takeitrand}
		\end{gather} 
		Multiplying on both sides by 2 and using \eqref{eq:probabilities} with a union bound leads to the desired result.
	\end{proof}

	\subsection{Additional Results and Proofs}
	\noindent Using the lemmas in Section \ref{sec:intermediate}, we derive the excess-risk rate of ERM under the Bernstein condition:
	\begin{lemma}
		\label{lem:theermrand}
		Let $B>1$, $\beta \in[0,1]$ and suppose that the $(\beta,B)$-Bernstein condition holds and $\cH$ is finite. Further, let $\rho>1$, $\delta \in(0,1)$, and $n_\delta$ be as in \eqref{eq:repeat}. Then, under Assumptions \ref{assum:processassum} and \ref{assum:algorithmassum}, the \emph{ERM} $\hat h_n \in \argmin_{h \in \cH} \frac{1}{n}\sum_{t=1}^n\ell(h,Z_t)$ satisfies, with probability at least $1-\delta$, 
		\begin{align}
			L(\hat h_n) - L(h_\star) \leq O\left( \frac{\ln (\ln(n|\cH|)/\delta)}{n} \right)^{\frac{1}{2-\beta}}  +\frac{\ln (\ln(n|\cH|)/\delta)}{n} ,
		\end{align}
		for all $n \geq n_{\delta}\vee (16(\rho+1) \ln |\cH|)$.
	\end{lemma}
	\begin{proof}[\bf Proof of Lemma \ref{lem:theermrand}] 
		Let $n_\delta$ be as in \eqref{eq:repeat} and define 
		\begin{align}
			\epsilon_k\coloneqq	\frac{2(\rho+1)\left(2\ln |\cH| + \ln \frac{\phi_{\rho}(k)}{ \delta}\right)}{k}, \ \ \text{and} \ \ \xi'_n \coloneqq 2\sqrt{\frac{\sum_{i=1}^n (\ell(\hat   h_n,Z_i)- \ell(h_\star,Z_i))^2  \cdot \epsilon_n}{n}} +\frac{4   \epsilon_n}{\rho+1}.\nn
		\end{align}
		Further, consider the events 
		\begin{gather}
			\cE \coloneqq \left\{\forall n\geq n_\delta, \ \ L(\hat  h_n)-L(h_\star)  \leq \what L_n(\hat h_n) -\what L_n(h_\star) +\xi'_n \right\}, \nn \\
			\cE' \coloneqq \left\{\forall n\geq 1,\ \begin{matrix}  \sqrt{\dfrac{\sum_{i=1}^n (\ell(\hat h_n,Z_i)- \ell(h_\star,Z_i))^2 \cdot    \epsilon_n}{2^{-5}n}}\leq   \dfrac{L(\hat h_n) -L(h_\star)}{2} + C\cdot \left(  \epsilon_n^{\frac{1}{2-\beta}}  + \epsilon_n\right) 
			\end{matrix}  \right\},\nn
		\end{gather}
		where $C$ is as in Lemma \ref{lem:thesquarerootrand}.
		By Corollary \ref{cor:bernerrorrand} and Lemma \ref{lem:thesquarerootrand}, instantiated with $P_0$ equal to the uniform prior over $\cH$ and $Q$ [resp.~$Q'$] equal to the Dirac at $\hat h_n$ [resp.~$h_\star$], we have
		\begin{align}
			\min(\P[\cE],\P[\cE'])  \geq 1 -\delta. \label{eq:probabilities2}
		\end{align}
		For the rest of this proof, we will assume that the event $\cE\cap \cE'$ holds, and let $n\geq n_\delta$. By the assumption that $\cE$ holds, we have
		\begin{align}
			L(\hat h_n) & = L(h_\star) + (L(\hat h_{n})- L(h_\star)),& \nn \\
			& \leq  L(h_\star)  + \what L_n(\hat h_{n}) - \what L_n (h_\star) + \xi'_n, & (\cE \text{ is true}) \nn \\
			& \leq  L(h_\star)  +\xi'_n, & \text{($\hat h_n$ is the ERM)}\nn \\
			& = L(h_\star) + 2\sqrt{\frac{\sum_{i=1}^n (\ell(\tilde  h_n,Z_i)- \ell(h_\star,Z_i))^2  \cdot \epsilon_n}{n}} +4\epsilon_n. &\label{eq:case0rand}
		\end{align}
		Now by the assumption that $\cE'$ holds, we can bound the middle term on the RHS of \eqref{eq:case0rand}, leading to
		\begin{align}
			L(\hat h_n) & = L(h_\star) + \frac{L(\hat h_n)- L(h_\star)}{2} + O\left(  \max_{\beta' \in\{1,\beta\}}\left( \frac{\ln (n|\cH|/\delta)}{n} \right)^{\frac{1}{2-\beta'}}  \right)  + 4\epsilon_n,\nn \\
			& = L(h_\star) + \frac{L(\hat h_n)- L(h_\star)}{2} + O\left( \frac{\ln (n|\cH|/\delta)}{n} \right)^{\frac{1}{2-\beta}}, \label{eq:needarrangerand}
		\end{align}
		for all $n\geq n_\delta\vee  (16(\rho+1) \ln |\cH|)$, where in the last inequality we used the definition of $\epsilon_n$. Combining \eqref{eq:needarrangerand} with \eqref{eq:probabilities2}, and applying a union bound, we obtain the desired result.
	\end{proof}

	\begin{proof}[\bf Proof of Theorem \ref{thm:monoton222}]~~
		First, note that by linearity of the expectation it suffices to show that
		\begin{align}
			\E\left[ L(P_n) -  L(P_{n-1}) \right]\leq 0,\nn
		\end{align}
		where the expectation is over the randomness of the samples $Z_{1:n}$. Moving forward, we let $\Delta_n \coloneqq  L(P_n) -  L(P_{n-1})$, and for $n\geq N$, define the event
		\begin{gather}
			\cE_n \coloneqq \left\{ L(\wtilde P_n)-L(P_{n-1})  \leq \what L_n(\wtilde P_n) -\what L_n(P_{n-1}) +\xi'_n  \right\}, \label{eq:event2rand0} 
		\end{gather}
		where $\wtilde P_{k} \coloneqq \B(Z_{1:k})$ and $(P_{k})$ as in Algorithm \ref{alg:riskmonotone2} with the choice of $(\xi'_k)$ in the theorem's statement. Observe that by Lemma \ref{lem:bernerrorrand0}, we have $\P[\cE_n]\geq 1 -1/n^b$ for all $n\geq N$, under Assumptions \ref{assum:processassum} and \ref{assum:algorithmassum}. 
		
		Now, by the law of the total expectation, we have
		\begin{align}
			\E[\Delta_n] &= \P[\cE_n] \cdot \E[\Delta_n \mid \cE_n] +\P[\cE_n^\mathrm{c}] \cdot \E[\Delta_n \mid \cE_n^\mathrm{c}] , \nn \\ & \leq \P[\cE_n] \cdot \E[\Delta_n \mid \cE_n] + 1/n^b. \nn
		\end{align}
		where the last inequality follows by the fact that the loss $\ell$ takes values in $[0,1]$ and that $\P[\cE_n^{\mathrm{c}}] \leq 1/n^b$. By applying the law of the total expectation again, we obtain
		\begin{align}
			\E[\Delta_n] &= \P[\{P_n \equiv P_{n-1}\} \cap \cE_n] \cdot \E[\Delta_n \mid \{P_n \equiv P_{n-1}\} \cap \cE_n]\nn \\ & \qquad  +\P[\{P_n \not\equiv P_{n-1}\}\cap \cE_n] \cdot \E[\Delta_n \mid \{P_n \not\equiv  P_{n-1}\} \cap \cE_n] + 1/n^b, \nn \\ & \leq \P[\{P_n \not\equiv P_{n-1}\}\cap \cE_n] \cdot \E[\Delta_n \mid \{P_n \not\equiv  P_{n-1}\}\cap \cE_n] + 1/n^b, \label{eq:one}
		\end{align}
		where the last inequality follows by the fact that if $P_n \equiv P_{n-1}$, then $\Delta_n =0$. 
		Now, if $P_n \not\equiv P_{n-1}$, then by Line \ref{line:40} of Algorithm \ref{alg:riskmonotone2}, we have
		\begin{align}
			\what L_n({P}_n) =	\what L_n(\wtilde{P}_n) \leq  \what L_n(P_{n-1}) -\xi_n',\label{eq:sup}
		\end{align}
		Under the event $\cE_n$, we have
		\begin{align}
			L(\wtilde P_n)-L(P_{n-1})  \leq \what L_n(\wtilde P_n) -\what L_n(P_{n-1}) +\xi'_n . \nn
		\end{align}
		This, in combination with \eqref{eq:sup}, implies that under the event $\cE_n \cap \{P_n \not\equiv P_{n-1}\}$,
		\begin{align}
			\Delta_n  = L(\wtilde{P}_n)- L(P_{n-1}) \leq  -\xi'_n +\xi_n' = 0.\nn
		\end{align}
		As a result, we have
		\begin{align}
			\E[ \Delta_n    \mid  \{P_n \not\equiv  P_{n-1}\}\cap \cE_n]\leq 0.\label{eq:last}
		\end{align}
		Combining \eqref{eq:one} and \eqref{eq:last} yields the desired result.
	\end{proof}
	
	\begin{proof}[\bf Proof of Proposition \ref{prop:excessriskforthething}]
		The risk monotonicity claim follows from Theorem \ref{thm:monoton22}, and the excess risk rate follows from Theorem \ref{thm:riskdecomp} and Lemma \ref{lem:theermrand}.
	\end{proof}
	
	\section{Risk Monotonicity without PAC-Bayes}
	\label{sec:nopac}
	In this section, we show how risk monotonicity can be achieved in the i.i.d.~setting without Assumption \ref{assum:algorithmassum}. For this, we will use a concentration inequality due to \cite{maurer2009empirical} that has an empirical variance term under the square root just like ours in Theorem \ref{thm:timeunifbern}. To present this inequality, we first present some new notation. For any $Z_{1:n}\in \cZ^{n}$, we let $\ell\circ\cH(Z_{1:n}) \coloneqq (\ell(h,Z_1), \dots,\ell(h,Z_n))$. Further, for any subset $\cA \subset \reals^n$ and $\epsilon>0$, we let $\cN(\epsilon, \cA, \|\cdot\|_{\infty})$ be the cardinality of smallest subset $\cA_0\subseteq \cA$ such that $\cA$ is contained in the union of $\|\cdot\|_{\infty}$-balls of radii $\epsilon $ centered at points in $\cA_0$. Finally, we consider the following complexity measure:
	\begin{align}
		\cN_{\infty}(\epsilon,\ell\circ \cH, n) \coloneqq \sup_{Z_{1:n}\in \cZ^n} \cN(\epsilon, \ell\circ \cH(Z_{1:n}), \|\cdot \|_{\infty}).
	\end{align}
	With this, we state the concentration inequality due to \cite{maurer2009empirical} that we will need:
	\begin{theorem}
		\label{thm:mauer}
		Let $Z$ be a random variable with values in a set $\cZ$ with distribution $\pi$, and let $\cH$ be a set of hypotheses. Further, let $\delta \in(0,1)$, $n\geq 16$, and set $$\cM(n)\coloneqq 10 \cN_{\infty}(1/n, \ell\circ \cH, 2n).$$
		Then, with probability at least $1-2\delta$ in the random vector $Z_{1:n}\sim \pi^n$, we have
		\begin{align}
			\forall h \in \cH, \quad 	\left|\E\left[\ell(h, Z)\right] -  \frac{1}{n}\sum_{i=1}^n \ell(h,Z_i)\right|\leq \sqrt{\frac{18 V_n \ln (\cM(n)/\delta)}{n}} + \frac{15 \ln (\cM(n)/\delta)}{n-1},\nn
		\end{align}	
		where $V_n \coloneqq V_n(\ell\circ \cH,Z_{1:n})\coloneqq \frac{1}{n(n-1)}\sum_{1 \leq i<j\leq n} \left(\ell(h,Z_i) -\ell(h,Z_j)  \right)^2$.
	\end{theorem}
	\begin{algorithm}[H]
		\caption{A Deterministic Risk Monotonic Algorithm Wrapper}
		\label{alg:riskmonotone22}
		\begin{algorithmic}[1]
			\Require  A base learning algorithm $\hat h :\bigcup_{i=1}^{\infty} \cZ^i \rightarrow \cH$. 
			\Statex~~~~~~~~~ Initial hypothesis $\hat h_0 \in \cH$. 
			\Statex~~~~~~~~~ Samples $Z_1,\dots,Z_n$. 
			\vspace{0.2cm} 
			\For{$k=1,\dots,n$}
			\State Set $\displaystyle \what V_k \coloneqq \frac{1}{k(k-1)} \sum_{1\leq i<j\leq k} \left( \ell(h(Z_{1:k}), Z_i) - \ell(\hat h_{k-1},Z_i) - \ell(h(Z_{1:k}), Z_j) + \ell(\hat h_{k-1},Z_j)\right)^2$.
			\State Set $\displaystyle \xi_k =\sqrt{\frac{18 \what V_k \ln (\cM(k)/k)}{k}} + \frac{30 \ln (\cM(k)/k)}{k-1}$.
			\If{$\displaystyle \frac{1}{k}\sum_{i=1}^k\ell(\hat h(Z_{1:k}),Z_i)]- \frac{1}{k}\sum_{i=1}^k \ell(\hat h_{k-1} ,Z_i)\leq - \xi_k$}\label{line:402} 
			\State  Set $\hat h_k=\hat h(Z_{1:k})$.
			\Else
			\State Set $\hat h_k = \hat h_{k-1}$.
			\EndIf
			\EndFor
			\State Return $\hat h_n$.
		\end{algorithmic}
	\end{algorithm}
	Using Theorem \ref{thm:mauer} and following the same steps in the proof of Theorem \ref{thm:monoton222}, it follows that Algorithm \ref{alg:riskmonotone22} is risk monotonic in expectation (up to an additive $2/k$ term) for all sample sizes. Furthermore, since the concentration inequality in Theorem \ref{thm:mauer} has an empirical variance term under the square-root (just like ours in Theorem \ref{thm:timeunifbern}), the risk decomposition in our Theorem \ref{thm:riskdecomp} also holds for Algorithm \ref{alg:riskmonotone22}, albeit with probability at least $1- O(1/n)$ for sample size $n$.

\end{document}